
\documentclass[journal]{IEEEtran}
%

\usepackage{xspace}
\usepackage{enumitem}
\usepackage{threeparttable}
\usepackage{amsmath,amssymb,amsthm}
\usepackage{algorithm}
\usepackage{caption}
\usepackage{algpseudocode}
\usepackage{epsfig}
\usepackage{graphicx}
\usepackage{amsmath}
\usepackage{amssymb}
\usepackage{bm}
\usepackage{multirow}
\usepackage{subfigure}
\usepackage{verbatim}
\usepackage{soul, xcolor}
\usepackage{color}
\usepackage[sort,compress]{cite}

\newtheorem{theorem}{Theorem}

\hyphenation{op-tical net-works semi-conduc-tor}

\begin{document}

\title{A unified framework for manifold landmarking}

\author{Hongteng Xu,~
        Licheng Yu,~
        Mark A. Davenport,~\IEEEmembership{Senior Member,~IEEE},~
        Hongyuan Zha
\thanks{H.\ Xu and M.\ A.\ Davenport are with the Department
of Electrical and Computer Engineering, Georgia Institute of Technology, Atlanta,
GA, 30332 USA (e-mail: $\{$hxu42,~mdav$\}$@gatech.edu).}
\thanks{L.\ Yu is with the Department of Computer Science, University of
North Carolina at Chapel Hill, Chapel Hill, NC, 27599 USA (e-mail:
licheng@cs.unc.edu).}
\thanks{H.\ Zha is with the College of Computing, Georgia Institute of Technology, Atlanta, GA, 30332 USA (e-mail: zha@cc.gatech.edu).}}

\markboth{IEEE Transactions on Signal Processing,~Vol.~XX, No.~X, XX~201X}%
{Xu \MakeLowercase{\textit{et al.}}: Active manifold learning}

\maketitle

\begin{abstract}
The success of semi-supervised manifold learning is highly dependent on the quality of the labeled samples.
Active manifold learning aims to select and label representative landmarks on a manifold from a given set of samples to improve semi-supervised manifold learning.
In this paper, we propose a novel active manifold learning method based on a unified framework of manifold landmarking.
In particular, our method combines geometric manifold landmarking methods with algebraic ones. We achieve this by using the Gershgorin circle theorem to construct an upper bound  on the learning error that depends on the landmarks and the manifold's alignment matrix in a way that captures both the geometric and algebraic criteria. We then attempt to select landmarks so as to minimize this bound by iteratively deleting the Gershgorin circles corresponding to the selected landmarks.
We also analyze the complexity, scalability, and robustness of our method through simulations, and demonstrate its superiority compared to existing methods.
Experiments in regression and classification further verify that our method performs better than its competitors.
\end{abstract}

\begin{IEEEkeywords}
Semi-supervised manifold learning, active learning, manifold landmarking, Gershgorin circle theorem.
\end{IEEEkeywords}

\IEEEpeerreviewmaketitle
\section{Introduction}
\IEEEPARstart{S}{emi-supervised} manifold learning methods~\cite{belkin2006manifold,yang2006semi,zhang2008spectral,zhu2009introduction,zheng2011graph,BelkiN_Semi} have been widely used to capture low-dimensional structure in high-dimensional data.
These methods take semantic information (labels) into consideration when learning the mapping from the ambient space to the latent space.
The learned latent variables can be used as features for many learning tasks~\cite{belkin2006manifold,zheng2011graph,xu2015dictionary}.
In some cases~\cite{yang2006semi,zhang2008spectral}, we are even able to learn the mapping from the ambient space to the label space directly and estimate the labels for the complete data set.

An interesting and important problem in the context of semi-supervised manifold learning is \emph{how to select landmarks from a large number of unlabeled samples to minimize the learning error for the remaining samples.}
This problem is very common in practical situations --- given a large number of unlabeled samples, we can often label only a few of them because of limitations in budget, time, and other resources.
Some typical examples where this arises include:
\begin{itemize}
\item \textbf{Image classification.} Given a large number of unlabeled images, we generally have very limited human resources with which to label them.
A more practical strategy is labeling a subset of the images and applying semi-supervised learning methods to classify the remaining unlabeled ones.
The challenge is to select which images to label in order to achieve the best classifier.
\item \textbf{Network management and information diffusion.} In a social network, advertisers with limited budgets need to select influential users in order to disseminate advertising and promotional information efficiently.
The challenge is how to identify and select users to improve and accelerate the spread of information.
\item \textbf{Smart buildings.} In a smart building we need to distribute sensors, e.g., surveillance cameras or environmental sensors, with a limited budget.
If each sensor can only detect anomalies in a small region, the challenge is how to assign their locations to maximize their coverage.
\end{itemize}
We can view these and similar problems as ``active manifold learning'' problems~\cite{zhu2003combining,xu2015active}, where the goal is to select representative landmarks for semi-supervised manifold learning.

In this paper, we propose a novel landmarking algorithm combining geometric landmarking methods with algebraic ones.
Specifically, we first give a bound on the learning error of semi-supervised manifold learning based on the manifold's alignment matrix.
Then, we show that many existing methods actually minimize the bound via different but one-sided strategies, which can be unified into a common algorithmic framework.
We propose a computationally-friendly surrogate for the error bound based on the Gershgorin circle theorem~\cite{gershgorin1931uber}, which is used as the objective function for active manifold learning.
We then propose a heuristic but effective landmark selection algorithm, which selects landmarks via deleting and updating the Gershgorin circles iteratively, where the indices of deleted circles corresponds to landmarks.
We analyze the complexity, scalability, and robustness of our algorithm and demonstrate its superiority to the existing state-of-art landmarking methods.

The contributions of our work are three-fold.
First, we explore properties of manifold learning and semi-supervised manifold learning and propose a unified framework of manifold landmarking for active manifold learning.
Second, in the proposed framework, we analyze existing manifold landmarking methods in depth and propose an active manifold learning algorithm that can be viewed as a generalization and unification of existing methods.
Third, we propose a Gershgorin circle-based landmarking algorithm with low computational complexity and high scalability that achieves encouraging results in both regression and classification tasks.

The remainder of this paper is organized as follows.
We first introduce related work and background on manifold learning, semi-supervised manifold learning, and active learning in Section~\ref{sec:relatedwork}.
Section~\ref{sec:land} provides an analysis of existing manifold landmarking methods and constructs our unified framework.
Section~\ref{sec:unified} contains a derivation of our proposed method for active manifold learning based on this unified framework and a comparison of this method to existing approaches.
Experiments and discussion are provided in Section~\ref{sec:exp}.
Finally, Section~\ref{sec:con} concludes the paper.
The Appendix contains additional technical details.

\section{Related Work and Background}\label{sec:relatedwork}
\subsection{Manifold learning}
Manifold models arise in a wide variety of signal processing and machine learning problems, and manifold learning serves as an important tool in applications such as computer vision and imaging~~\cite{DonohG_Image,WeinbS_Unsupervised,HintoDR_Modelling,bachmann2005exploiting}, array signal processing~\cite{schmidt1992multilinear,belloni2007doa,wu2010acoustic}, and graph-based signal analysis~\cite{shuman2013emerging,sandryhaila2013discrete,CostaH_Geodesic}, just to name a few.
Typical manifold learning methods include the locally linear embedding (LLE)~\cite{roweis2000nonlinear}, the local tangent space alignment (LTSA)~\cite{zhang2004principal}, the ISOMAP~\cite{tenenbaum2000global} method, the Laplacian Eigenmap (LE)~\cite{belkin2003laplacian,vepakomma2015fast} and the diffusion map~\cite{coifman2006diffusion,talmon2013diffusion}.
These methods can be unified into the framework of a common eigenvalue problem~\cite{yang2006semi,yang2010local,xu2015active}.
Specifically, suppose the manifold $\mathcal{X}$ is a low-dimensional surface embedded in a high-dimensional space, and its samples $\bm{X}=[\bm{x}_1,...,\bm{x}_N]\in\mathbb{R}^{D\times N}$ are the high-dimensional observations of the points on the manifold.
The manifold learning methods above find a low-dimensional representation of $\bm{X}$, denoted as $\bm{Y}=[\bm{y}_1,...,\bm{y}_N]\in\mathbb{R}^{d\times N}$ ($d\ll D$), by solving
\begin{eqnarray}\label{ML}
\begin{aligned}
\min_{\bm{Y}}&~\mbox{tr}(\bm{Y}\bm{\Phi}\bm{Y}^{T})\\
\text{s.t.}&~\bm{Y}\bm{Y}^{T}=\bm{I}_d.
\end{aligned}
\end{eqnarray}
Here $(\cdot)^T$ is the transpose of matrix, $\mbox{tr}(\cdot)$ calculates the trace of matrix, and $\bm{I}_d$ is a $d\times d$ identity matrix.
The matrix $\bm{\Phi}\in \mathbb{R}^{N\times N}$ is defined on a $K$-nearest neighbors ($K$-NN) graph derived from $\bm{X}$.
$\bm{\Phi}$ can be the Laplacian graph in LE, a variant of the Laplacian graph in diffusion maps, or the alignment matrix in ISOMAP, LLE, or LTSA.
In this paper, we call $\bm{\Phi}$ the alignment matrix.\footnote{Note that in this paper we focus on the manifold $\mathcal{X}$ embedded in $\mathbb{R}^{D}$. However, we note that ultimately our landmarking algorithm is applicable to any manifold with alignment matrix $\bm{\Phi}$.}
The derivations of $\bm{\Phi}$ for various manifold learning methods are given in Appendix A.

\subsection{Semi-supervised manifold learning}
Let $\bm{X} = [\bm{X}_{\mathcal{L}}, \bm{X}_{\bar{\mathcal{L}}} ]$, where
$\mathcal{L}$ is the index set of the labeled samples with cardinality $|\mathcal{L}|=L$ and $\bar{\mathcal{L}}=\{1,...,N\}\setminus \mathcal{L}$ is the index set of the unlabeled samples.
Given $\bm{X}_{\mathcal{L}}$ and labels $\bm{Z}_{\mathcal{L}}$, the goal of semi-supervised manifold learning is to determine the labels $\bm{Z}_{\bar{\mathcal{L}}}$ of the unlabeled samples $\bm{X}_{\bar{\mathcal{L}}}$~\cite{belkin2006manifold,yang2006semi,zhang2008spectral,BelkiN_Semi}.
To achieve this aim, the Least Squares (LS) method in~\cite{yang2006semi} learns the mapping from the ambient space $\mathcal{X}$ to the label space $\mathcal{Z}$ directly by solving
\begin{eqnarray}\label{LS}
\begin{aligned}
\min_{\bm{Z}_{\bar{\mathcal{L}}}}~\mbox{tr}\left(
[\bm{Z}_{\mathcal{L}},\bm{Z}_{\bar{\mathcal{L}}}]
\begin{bmatrix}
\bm{\Phi}_{\mathcal{LL}} &\bm{\Phi}_{\mathcal{L}\bar{\mathcal{L}}}\\
\bm{\Phi}_{\bar{\mathcal{L}}\mathcal{L}} &\bm{\Phi}_{\bar{\mathcal{L}}\bar{\mathcal{L}}}
\end{bmatrix}
\begin{bmatrix}
\bm{Z}_{\mathcal{L}}^T\\
\bm{Z}_{\bar{\mathcal{L}}}^T
\end{bmatrix}
\right)+\underbrace{\gamma\|\bm{Z}_{\bar{\mathcal{L}}}\|_F^{2}}_{\small\text{optional}},
\end{aligned}
\end{eqnarray}
where the first term of (\ref{LS}) enforces a manifold structure on $\bm{Z}$ (estimated from $\bm{X}$) and the second term of (\ref{LS}) is an optional regularizer on the Frobenius norm of $\bm{Z}_{\bar{\mathcal{L}}}$.

The spectral method ({Spec}) in~\cite{zhang2008spectral} assumes that both the data manifold $\mathcal{X}$ and the label manifold $\mathcal{Z}$ are different images of the same latent space: $\mathcal{X} = h(\mathcal{Y})$ and $\mathcal{Z} = g(\mathcal{Y})$, and the mapping $g:~\mathcal{Y}\rightarrow \mathcal{Z}$ is an affine transformation.
The $\bm{Y}\subset \mathcal{Y}$ is learned by traditional manifold learning algorithm with a label-based regularizer:
\begin{eqnarray}\label{Spec}
\begin{aligned}
\min_{\bm{Y}}&~\mbox{tr}(\bm{Y\Phi Y}^{T})+\gamma\mbox{tr}(\bm{Y}_{\mathcal{L}}\bm{G}\bm{Y}_{\mathcal{L}}^{T})\\
\text{s.t.}&~\bm{YY}^{T}=\bm{I}_d,
\end{aligned}
\end{eqnarray}
where $\bm{G}$ is the orthogonal projection whose null space is spanned by $[\bm{1}, \bm{Z}_{\mathcal{L}}^{T}]$.
After obtaining $\bm{Y}$, we then learn an affine transformation between $\bm{Y}_{\mathcal{L}}$ and $\bm{Z}_{\mathcal{L}}$.

It should be noted that besides learning labels of samples, in more general cases the latent variables learned by semi-supervised manifold learning can be used as inputs/features to traditional learning algorithms~\cite{belkin2006manifold,zheng2011graph,xu2015dictionary}.

\begin{algorithm}[!t]
  \caption{Active Manifold Learning}
  \label{alg2}
  \begin{algorithmic}[1]
    \Require $\bm{X}=[\bm{x}_i]\in\mathbb{R}^{D\times N}$, the number of landmarks $L$.
    \Ensure Labels $\bm{Z}=[\bm{Z}_{\mathcal{L}}, \bm{Z}_{\bar{\mathcal{L}}}]\in\mathbb{R}^{d\times N}$.
    \State Generate $\bm{\Phi}$ via any manifold learning algorithm.
    \State Apply a landmark selection algorithm to choose $\mathcal{L}$.
    \State Label $\{\bm{x}_i\}_{i\in\mathcal{L}}$ with $\bm{Z}_{\mathcal{L}}$.
    \State \emph{For learning labels:} \\
    \quad\quad Apply SSML to $\{\bm{X},\bm{Z}_{\mathcal{L}}\}$.
    \State \emph{For manifold-regularized tasks:} \\
    \quad\quad Learn features $\bm{Y}$ and train model with $\{\bm{Y}_{\mathcal{L}}, \bm{Z}_{\mathcal{L}}\}$.
    \State Return estimated labels $\bm{Z}_{\bar{\mathcal{L}}}$ for $\{\bm{x}\}_{i\in \bar{\mathcal{L}}}$.
  \end{algorithmic}
\end{algorithm}

\subsection{Active learning}
Active learning~\cite{cohn1996active,settles2010active,liang2015gpm} has been used to select representative samples~\cite{schohn2000less,paisley2010active,avrachenkov2013choice,tsiligkaridis2015decentralized} and improve learning results in many applications ranging from computer vision~\cite{vijayanarasimhan2014large}, natural language processing~\cite{thompson1999active}, speech recognition~\cite{riccardi2005active}, data mining~\cite{zhao2013active}, and geoscience~\cite{chen2006improved,chi2013selection}, and many more.

From the viewpoint of active learning~\cite{zhu2003combining,chi2014active}, the challenge of active manifold learning is to select which samples on the manifold to label in order to minimize the learning error on the remaining samples.
Following~\cite{yang2006semi,xu2015active}, our approach to active manifold learning is to combine a landmark selection algorithm with semi-supervised manifold learning or manifold regularization. This is summarized in Algorithm~\ref{alg2}.

This problem is very close to manifold landmarking, where the aim is to select representative samples on a manifold.
Many methods have been proposed with different motivations, and accordingly, they apply different selection criteria.
The Nystr{\"o}m method~\cite{drineas2005nystrom,drineas2006fast} and its variants~\cite{zhang2008improved,kumar2012sampling} aim to achieve a good low-rank approximation of the kernel matrix of samples by selecting its columns and rows.
The method of optimal experimental design~\cite{pukelsheim1993optimal} and the volume sampling method~\cite{derezinski2017unbiased} are methods that aim to improve performance for (generalized) linear regression.
The method in~\cite{de2004sparse,silva2006selecting} aims to maximize the minimum geodesic distance between landmarks.
The work in~\cite{wachinger2015diverse} achieves a scalable landmarking method based on determinantal point processes.

The concerns that arise in the semi-supervised setting are not necessarily limited to simply preserving the geometric structure -- nevertheless, we will see that manifold landmarking methods can play an important role in the active manifold learning problem.
Below we will describe two broad categories of landmarking methods which we denote {\em algebraic} and {\em geometric} methods, and then show that they can be unified into a common framework for active manifold learning.
Compared with our previous work~\cite{xu2015active} and other existing  methods, the proposed landmarking method in this paper considers both the algebraic aim and the geometric aim of manifold landmarking and applies a new objective function, which achieves at least comparable learning results despite having much lower computational complexity.

\section{A Unified Framework for Manifold Landmarking}\label{sec:land}
\subsection{Algebraic methods}\label{ssec1}
In~\cite{xu2015active} we recently proposed a manifold landmarking method (called \textbf{MinCond}) based on an algebraic analysis of semi-supervised manifold learning that aims to approximately minimize the condition number of the alignment matrix. This method is motivated by the approaches to semi-supervised manifold learning in (\ref{LS}) and (\ref{Spec}). In particular, if we denote the objective function in (\ref{LS}) as $f(\bm{Z})$ and set the gradient of $f(\bm{Z})$ with respect to the labels $\bm{Z}_{\bar{\mathcal{L}}}$ to be zero, i.e., $\frac{\partial f(\bm{Z})}{\partial \bm{Z}_{\bar{\mathcal{L}}}}=\bm{0}$, we can obtain a closed-form solution for $\bm{Z}_{\bar{\mathcal{L}}}$ by solving the following linear system of equations:
\begin{eqnarray}\label{LinearSys}
\begin{aligned}
(\bm{\Phi}_{\bar{\mathcal{L}}\bar{\mathcal{L}}} +\gamma\bm{I}_{N-L})\bm{Z}_{\bar{\mathcal{L}}}^{T}
=\bm{\Phi}_{\bar{\mathcal{L}}\mathcal{L}}\bm{Z}_{\mathcal{L}}^{T}.
\end{aligned}
\end{eqnarray}
One can also consider the Lagrangian function of (\ref{Spec}), i.e.,
\begin{equation} \label{eq:lagrangian}
\mbox{tr}(\bm{Y\Phi Y}^T)+\gamma \mbox{tr}(\bm{Y}_{\mathcal{L}}\bm{G}\bm{Y}_{\mathcal{L}}^T)+\mbox{tr}(\bm{B}(\bm{YY}^T-\bm{I}_d)),
\end{equation}
where $\bm{B}\in\mathbb{R}^{d\times d}$ contains the Lagrange multipliers. Note that by symmetry we can decompose $\bm{B}$ as $\sum_{i=1}^{d}\bm{\beta}_i\bm{\beta}_i^{T}$, where $\bm{\beta}_i\in\mathbb{R}^d$ for $i=1,\ldots,d$, and rewrite~\eqref{eq:lagrangian} as
\[
\mbox{tr}(\bm{Y\Phi Y}^T)+\gamma \mbox{tr}(\bm{Y}_{\mathcal{L}}\bm{G}\bm{Y}_{\mathcal{L}}^T)+\sum_{i=1}^{d}\bm{\beta}_i^T\bm{YY}^T\bm{\beta}_i-\bm{\beta}_i^T\bm{\beta}_i.
\]
Setting the gradient with respect to the unknown latent variables $\bm{Y}_{\bar{\mathcal{L}}}$ to be zero yields
\begin{eqnarray}\label{LinearSys2}
\begin{aligned}
(\bm{\Phi}_{\bar{\mathcal{L}}\bar{\mathcal{L}}})\bm{Y}_{\bar{\mathcal{L}}}^{T}+2\bm{Y}_{\bar{\mathcal{L}}}^{T}\sum_{i=1}^{d}(\bm{\beta}_i\bm{\beta}_i^T)=\bm{\Phi}_{\bar{\mathcal{L}}\mathcal{L}}\bm{Y}_{\mathcal{L}}^{T}.
\end{aligned}
\end{eqnarray}

Note that if we ignore the effect of the optional regularizer and the Lagrange multipliers (i.e., set $\gamma$ and the $\bm{\beta}_i$, which can typically be set quite small, to zero), then (\ref{LinearSys2}) is equivalent to (\ref{LinearSys}).
Thus, although the analysis below is for (\ref{LS}), it can also apply to (\ref{Spec}) by simply replacing $\bm{Z}_{\bar{\mathcal{L}}}$ with $\bm{Y}_{\bar{\mathcal{L}}}$ throughout.

In practice, we typically expect the observations $\bm{X}$ to be somewhat noisy.  In this case, we can treat the corresponding alignment matrix $\bm{\Phi}$ as being also contaminated with noise, in which case (\ref{LinearSys}) becomes
\begin{eqnarray}\label{error}
\begin{aligned}
(\bm{\Phi}_{\bar{\mathcal{L}}\bar{\mathcal{L}}}+\bm{E}_2)\widehat{\bm{Z}}_{\bar{\mathcal{L}}}^{T}=
(\bm{\Phi}_{\bar{\mathcal{L}}\mathcal{L}}+\bm{E}_1)\bm{Z}_{\mathcal{L}}^{T},
\end{aligned}
\end{eqnarray}
where $\bm{E}_1$, $\bm{E}_2$ are noise matrices and
$\widehat{\bm{Z}}_{\bar{\mathcal{L}}}$ is our estimate of the $\bm{Z}_{\bar{\mathcal{L}}}$ that one would obtain using the ``noise-free'' $\bm{\Phi}$.

As shown in~\cite{golub2012matrix,xu2015active}, the relative error between our estimate $\widehat{\bm{Z}}_{\bar{\mathcal{L}}}$ and the ``noise-free'' estimate $\bm{Z}_{\bar{\mathcal{L}}}$ is bounded by
\begin{eqnarray}\label{bound}
\begin{aligned}
\frac{\|\bm{Z}_{\bar{\mathcal{L}}}-\widehat{\bm{Z}}_{\bar{\mathcal{L}}}\|_2}{\|\bm{Z}_{\bar{\mathcal{L}}}\|_2} &\leq
\kappa(\bm{\Phi}_{\bar{\mathcal{L}}\bar{\mathcal{L}}})\left( \frac{\|\bm{E}_{1}\|_2}{\|\bm{\Phi}_{\bar{\mathcal{L}}\mathcal{L}}\|_2}+\frac{\|\bm{E}_{2}\|_2}{\|\bm{\Phi}_{\bar{\mathcal{L}}\bar{\mathcal{L}}}\|_2}\right)\\
&\leq
\epsilon \, \kappa(\bm{\Phi}_{\bar{\mathcal{L}}\bar{\mathcal{L}}}) \left( \frac{1}{\|\bm{\Phi}_{\bar{\mathcal{L}}\mathcal{L}}\|_2}+\frac{1}{\|\bm{\Phi}_{\bar{\mathcal{L}}\bar{\mathcal{L}}}\|_2}\right),
\end{aligned}
\end{eqnarray}
where $\|\cdot\|_2$ is the induced $\ell_2$ matrix norm, $\kappa( \cdot )$ computes the condition number, and $\epsilon = \max(\|\bm{E}_{1}\|_2,\|\bm{E}_{2}\|_2)$.
Because the relative learning error is directly related to $\kappa(\bm{\Phi}_{\bar{\mathcal{L}}\bar{\mathcal{L}}})$, {MinCond} aims to select landmarks by deleting $L$ rows/columns of $\bm{\Phi}$ so that the remaining principal submatrix $\bm{\Phi}_{\bar{\mathcal{L}}\bar{\mathcal{L}}}$ has the smallest possible condition number:
\begin{eqnarray}\label{opt1}
\begin{aligned}
\min_{\mathcal{L}}&~\kappa(\bm{\Phi}_{\bar{\mathcal{L}}\bar{\mathcal{L}}})\\
\text{s.t.}&~|\mathcal{L}|=L.
\end{aligned}
\end{eqnarray}

Traditional condition number minimization algorithms such as ~\cite{braatz1994minimizing,greif2006minimizing,lu2011minimizing,chen2011minimizing} require the feasible domain to be a compact convex set of the cone of positive semidefinite matrices, which is not available for (\ref{opt1}).
Generally, (\ref{opt1}) can be solved approximately by the Rank-revealing QR-factorization (RRQR) in~\cite{hong1992rank}.
In Appendix B, we demonstrate that RRQR can give an upper bound on the solution of (\ref{opt1}).
However, the bound is too loose for practical application.
In practice, MinCond reformulates the problem from minimizing the condition number of the alignment matrix to minimizing the dynamic range of the eigenvalues of the logarithmic alignment matrix and deletes the rows/columns of the alignment matrix iteratively.
Specifically, the logarithmic version of the objective function in (\ref{opt1}) is
\begin{eqnarray}\label{logactive}
\begin{aligned}
|\ln{\lambda_{\max}(\bm{\Phi}_{\bar{\mathcal{L}}\bar{\mathcal{L}}})}-\ln{\lambda_{\min}(\bm{\Phi}_{\bar{\mathcal{L}}\bar{\mathcal{L}}})}|.
\end{aligned}
\end{eqnarray}
Here, $\ln{\lambda_{\max}}$ and $\ln{\lambda_{\min}}$ are the largest and the smallest eigenvalues of $\ln{(\bm{\Phi}_{\bar{\mathcal{L}}\bar{\mathcal{L}}})}$.
Instead of minimizing the dynamic range of the eigenvalues directly, we minimize an upper bound.
Specifically, the objective function becomes
\begin{eqnarray}\label{final}
\begin{aligned}
|\Lambda_{u}(\bm{\Phi}_{\bar{\mathcal{L}}\bar{\mathcal{L}}})
-\Lambda_{l}(\bm{\Phi}_{\bar{\mathcal{L}}\bar{\mathcal{L}}})|,
\end{aligned}
\end{eqnarray}
where $\Lambda_u(\bm{\Phi}_{\bar{\mathcal{L}}\bar{\mathcal{L}}})$ and $\Lambda_l(\bm{\Phi}_{\bar{\mathcal{L}}\bar{\mathcal{L}}})$ are upper and lower bounds on the eigenvalues of $\ln{(\bm{\Phi}_{\bar{\mathcal{L}}\bar{\mathcal{L}}})}$, which are computed according to the Gershgorin circles of $\ln{(\bm{\Phi}_{\bar{\mathcal{L}}\bar{\mathcal{L}}})}$.
By deleting the Gershgorin circles of $\ln{(\bm{\Phi}_{\bar{\mathcal{L}}\bar{\mathcal{L}}})}$ iteratively, we can shrink the interval $[\Lambda_u, \Lambda_l]$, which bounds the condition number accordingly.
The samples corresponding to deleted circles are the selected landmarks.  Further details of MinCond can be found in~\cite{xu2015active}.

\subsection{Geometric methods}\label{ssec2}
Many other manifold landmarking methods have a more geometric flavor.
Representative methods include a geodesic distance-based algorithm ({\textbf{MaxMinGeo}})~\cite{de2004sparse,silva2006selecting} and an approximate determinantal point process-based algorithm ({\textbf{ApproxDPP}})~\cite{wachinger2015diverse}.
{MaxMinGeo} and {ApproxDPP} aim to distribute the landmarks to maximize the coverage of the landmarks on the target manifold by ensuring that the landmarks are not too close in the ambient space.
Denote the distance (under a given metric) between samples $\bm{x}_i$ and $\bm{x}_j$ as $d_{ij}$.
Mathematically, the problem of maximizing the minimum distance between landmarks can be written as
\begin{eqnarray}\label{mmgeo}
\begin{aligned}
\max_{\mathcal{L}}&~\min_{i,j\in \mathcal{L}}~d_{ij}\\
\text{s.t.}&~|\mathcal{L}|=L,
\end{aligned}
\end{eqnarray}
where $\mathcal{L}$ is the set of landmarks.
MaxMinGeo tries to solve this problem approximately in a heuristic way.
In particular, MaxMinGeo first initializes several landmarks (or one landmark) randomly, and then it adds new landmarks iteratively and ensures that the minimum geodesic distance between the new landmark and existing ones is maximized.

To further accelerate manifold landmarking, an approximate but scalable DPP method (ApproxDPP) is proposed in~\cite{wachinger2015diverse}, which makes an additional concession. Instead of maximizing the minimum distance between two arbitrary landmarks, this DPP-based method simply ensures that
the selection of the new landmark is performed using a probabilistic distribution that suppresses the probability of selecting existing landmarks' neighbors.
As a result, the landmarks will tend to not be neighbors of each other.
In Section~\ref{sec:exp}, we will show that the ApproxDPP method achieves comparable performance to MaxMinGeo using much less runtime, which can be viewed as an approximate but fast implementation of {MaxMinGeo}.

Both of these methods can be viewed as heuristics for attempting to approximately maximize the coverage of the landmarks on the target manifold by ensuring that once a landmark is selected, we use the opportunity to select additional landmarks to gain a higher degree of coverage of the manifold by avoiding the immediate neighbors of the landmarks selected up to that point.
When the distance between samples is defined on a $K$-NN graph of samples, the strategy of these methods can be re-interpreted based on the alignment matrix.
Specifically, the alignment matrices in LE, LLE, and LTSA have the following Property, which we prove in Appendix C:
\begin{quote}
\textbf{Property 1.} For the $\bm{\Phi}=[\phi_{ij}]$ in LE, LLE, and LTSA, $\phi_{ij}\neq 0$ if and only if samples $\bm{x}_i$ and $\bm{x}_j$ are neighbors in the $K$-NN graph.
\end{quote}
For example, LE uses a Laplacian graph matrix as the alignment matrix.
The entry $\phi_{ij}=-d_{ij}$ if samples $\bm{x}_i$ and $\bm{x}_j$ are neighbors, otherwise $\phi_{ij}=0$.  $\phi_{ii}=\sum_{j}d_{ij}$.

In this setting, the implicit goal of geometric methods that the landmarks should maximize the coverage of the landmarks on the target manifold can be achieved when the submatrix $\bm{\Phi}_{\bar{\mathcal{L}}\mathcal{L}}$ has as many nonzeros as possible.
In other words, we would like to ensure that landmarks in $\mathcal{L}$ have many direct connections in the $K$-NN graph to elements in $\bar{\mathcal{L}}$.  One way to promote this objective is to solve the optimization problem
\begin{eqnarray}\label{impg}
\begin{aligned}
\max_{\mathcal{L}}&~\max_{j \in \mathcal{L}} \|\bm{\phi}_j\|_0\\
\text{s.t.}&~|\mathcal{L}|=L,
\end{aligned}
\end{eqnarray}
where $\bm{\phi}_j$ is the column of $\bm{\Phi}_{\bar{\mathcal{L}}\mathcal{L}}$ corresponding to the index $j$ and $\|\cdot\|_0$ is the so-called ``$\ell_0$ norm'', which counts the number of nonzero elements in a vector.
This objective function encourages the selection of landmarks which are densely connected to elements in $\bar{\mathcal{L}}$.

Unfortunately, solving~\eqref{impg} directly is intractable.
Additionally, in some cases there can be many solutions to~\eqref{impg} with widely varying levels of coverage.
For example, instead of defining a $K$-NN graph, suppose we define the neighbors of a point as those with distances below a certain threshold.
Suppose that the most connected point has $M$ neighbors.
If $N-M>L-1$, then any set of landmarks containing that point and $L-1$ arbitrary points disconnected with it is a solution to~\eqref{impg}.
To address these limitations, in this work we further relax the objective function from the $\ell_0$-norm to the $\ell_1$ matrix norm, i.e., $\|\bm{\Phi}_{\bar{\mathcal{L}}\mathcal{L}}\|_1=\max_{j\in\mathcal{L}}\sum_{i\in\bar{\mathcal{L}}}|\phi_{ij}|$.
Finally, note that
\begin{eqnarray}\label{objG}
\begin{aligned}
\max_{\mathcal{L}}~\|\bm{\Phi}_{\bar{\mathcal{L}}\mathcal{L}}\|_1
\Leftrightarrow
\min_{\mathcal{L}}~\frac{1}{\|\bm{\Phi}_{\bar{\mathcal{L}}\mathcal{L}}\|_1}.
\end{aligned}
\end{eqnarray}

\subsection{A unified algorithmic framework}
As seen above, different viewpoints on landmarking methods actually lead to very different criteria for landmark selection.
On the one hand, the algebraic method focuses on minimizing learning error for the remaining samples by minimizing the condition number of the remaining principal submatrix (i.e., MinCond).
On the other hand, the geometric methods focus on maximizing the diversity of the landmarks.
Under certain metrics, the minimum distance between landmarks is maximized deterministically (i.e., MaxMinGeo) or probabilistically (i.e., DPP) to ensure that the landmarks have a good coverage on the target manifold.
Based on the analysis in Section~\ref{ssec1} and~\ref{ssec2}, we can unify these two kinds of methods into a single algorithmic framework.

Towards this end, we first note that we can further bound the right-hand side of (\ref{bound}) via the standard norm inequality for an $M\times N$ matrix $\bm{A}$ of $\|\bm{A}\|_1\le\sqrt{M}\|\bm{A}\|_2$, yielding
\begin{equation} \label{eq:l1bound}
\tfrac{\|\bm{Z}_{\bar{\mathcal{L}}}-\widehat{\bm{Z}}_{\bar{\mathcal{L}}}\|_2}{\|\bm{Z}_{\bar{\mathcal{L}}}\|_2} \leq \epsilon \, \kappa(\bm{\Phi}_{\bar{\mathcal{L}}\bar{\mathcal{L}}}) \sqrt{N-L}  \left( \tfrac{1}{\|\bm{\Phi}_{\bar{\mathcal{L}}\mathcal{L}}\|_1}+\tfrac{1}{\|\bm{\Phi}_{\bar{\mathcal{L}}\bar{\mathcal{L}}}\|_1}\right).
\end{equation}
When we consider~\eqref{eq:l1bound} in place of \eqref{bound}, we observe that both the algebraic and geometric methods can be viewed as attempting to minimize different parts of the same bound. MinCond aims to minimize the $\kappa(\bm{\Phi}_{\bar{\mathcal{L}}\bar{\mathcal{L}}})$ term in~\eqref{eq:l1bound}, while {MaxMinGeo} and {ApproxDPP} can be viewed as implicitly minimizing the $\frac{1}{\|\bm{\Phi}_{\bar{\mathcal{L}}\mathcal{L}}\|_1}$ term in~\eqref{eq:l1bound}. However, according to the bound in~\eqref{eq:l1bound}, the learning error is determined not only by these terms alone, but by their combination, and further also by $\frac{1}{\|\bm{\Phi}_{\bar{\mathcal{L}}\mathcal{L}}\|_1}+\frac{1}{\|\bm{\Phi}_{\bar{\mathcal{L}}\bar{\mathcal{L}}}\|_1}$.
In our view, the entire right side of~\eqref{eq:l1bound}, which considers all of these criteria simultaneously, provides us with a more natural and reasonable criterion for manifold landmarking in the context of active learning.
In particular, we can achieve manifold landmarking by solving
\begin{eqnarray}\label{Propose1}
\begin{aligned}
\min_{\mathcal{L}}&~\kappa(\bm{\Phi}_{\bar{\mathcal{L}}\bar{\mathcal{L}}})\left(\frac{1}{\|\bm{\Phi}_{\bar{\mathcal{L}}\mathcal{L}}\|_1}+\frac{1}{\|\bm{\Phi}_{\bar{\mathcal{L}}\bar{\mathcal{L}}}\|_1}\right)\\
\text{s.t.}&~|\mathcal{L}|=L.
\end{aligned}
\end{eqnarray}
Here, the objective function in (\ref{Propose1}) is the right side of inequality (\ref{bound}).
The first term $\kappa(\bm{\Phi}_{\bar{\mathcal{L}}\bar{\mathcal{L}}})\frac{1}{\|\bm{\Phi}_{\bar{\mathcal{L}}\mathcal{L}}\|_1}$ corresponds to a combination of algebraic and geometric manifold lankmarking methods.
The second term $\kappa(\bm{\Phi}_{\bar{\mathcal{L}}\bar{\mathcal{L}}})\frac{1}{\|\bm{\Phi}_{\bar{\mathcal{L}}\bar{\mathcal{L}}}\|_1}$ can be viewed as a regularizer.
Specifically, $\|\bm{\Phi}_{\bar{\mathcal{L}}\bar{\mathcal{L}}}\|_1=\max_{j\in\bar{\mathcal{L}}}\sum_{i\in\bar{\mathcal{L}}}|\phi_{ij}|=\max_{j\in\bar{\mathcal{L}}}\phi_{jj}+\sum_{i\in\bar{\mathcal{L}}\setminus j}d_{ij}$, which involves the sum of distances and that of connections between each unlabeled sample to the remaining unlabeled samples.
By trying to minimize $\frac{1}{\|\bm{\Phi}_{\bar{\mathcal{L}}\bar{\mathcal{L}}}\|_1}$, we ensure that there is at least one unlabeled sample densely connecting with the remaining unlabeled samples.
From the viewpoint of graph-based label propagation~\cite{zhu2009introduction}, those densely-connected unlabeled samples should aid in subsequent learning tasks --- the labels can spread from those labeled samples quickly as long as those densely-connected unlabeled samples are assigned labels. 
The importance of the regularizer is controlled by the condition number $\kappa(\bm{\Phi}_{\bar{\mathcal{L}}\bar{\mathcal{L}}})$.

\section{Gershgorin Circle-Based Landmark Selection}\label{sec:unified}

\subsection{Proposed algorithm}\label{ssec:alg}
Optimizing (\ref{Propose1}) directly is intractable.
Instead, we propose a heuristic but very effective algorithm for approximately solving the problem.
In particular, our algorithm involves two key steps.

\textbf{1) Constructing a surrogate objective function.} Given an alignment matrix $\bm{\Phi}$, we construct a symmetric matrix $\bm{\Psi}=\bm{\Phi}+\alpha\bm{I}_{N}$ as a regularized alignment matrix, where $\alpha > 0$, and replace the objective function in (\ref{Propose1}) with
\begin{eqnarray}\label{Propose2}
\begin{aligned}
\kappa(\bm{\Psi}_{\bar{\mathcal{L}}\bar{\mathcal{L}}})\left(\frac{1}{\|\bm{\Psi}_{\bar{\mathcal{L}}\mathcal{L}}\|_1}+\frac{1}{\|\bm{\Psi}_{\bar{\mathcal{L}}\bar{\mathcal{L}}}\|_1}\right).
\end{aligned}
\end{eqnarray}
By replacing $\bm{\Phi}$ with $\bm{\Psi}$ we can ensure that the smallest eigenvalue of $\bm{\Psi}$ is well-separated from 0, which as we will see below is key to our approach.
We can make this substitution because of the following property.
\begin{quote}
\textbf{Property 2.} The results of the manifold learning approach in (\ref{ML}) or the semi-supervised manifold learning approaches in (\ref{LS}) and (\ref{Spec}) are not changed by replacing $\bm{\Phi}$ with $\bm{\Psi}$.
\end{quote}
\begin{proof}
For manifold learning, after replacing $\bm{\Phi}$ with $\bm{\Psi}$, we can rewrite the objective function in~(\ref{ML}) as
\begin{eqnarray*}
\begin{aligned}
\mbox{tr}(\bm{Y\Phi Y}^{T}+\alpha\bm{YY}^{T}).
\end{aligned}
\end{eqnarray*}
Because of the normalization constraint $\bm{YY}^{T}=\bm{I_d}$, we have $\mbox{tr}(\bm{YY}^{T})=\mbox{tr}(\bm{I}_d)=d$.
Therefore, replacing $\bm{\Phi}$ with $\bm{\Psi}$ only introduces a constant into the original objective function, which does not change the optimal point.

For the LS approach to semi-supervised manifold learning, upon replacing $\bm{\Phi}$ with $\bm{\Psi}$ the objective function in~(\ref{LS}) can be rewritten as:
\begin{eqnarray*}
\begin{aligned}
\mbox{tr}(\bm{Z\Phi Z}^{T})+\alpha\mbox{tr}(\bm{Z}_{\mathcal{L}}\bm{Z}_{\mathcal{L}}^{T})
+\alpha\mbox{tr}(\bm{Z}_{\bar{\mathcal{L}}}\bm{Z}_{\bar{\mathcal{L}}}^{T}).
\end{aligned}
\end{eqnarray*}
Note that $\mbox{tr}(\bm{Z}_{\bar{\mathcal{L}}}\bm{Z}_{\bar{\mathcal{L}}}^{T})=\|\bm{Z}_{\bar{\mathcal{L}}}\|_F^2$.
Compared with the original objective function in~(\ref{LS}), replacing $\bm{\Phi}$ with $\bm{\Psi}$ is equivalent to adding the optional regularizer of $\bm{Z}_{\bar{\mathcal{L}}}$ with $\gamma=\alpha$.

For the Spec method, the objective function in~(\ref{Spec}) can be rewritten as:
\begin{eqnarray*}
\begin{aligned}
\mbox{tr}(\bm{Y\Phi Y}^{T})+\alpha d+\gamma\mbox{tr}(\bm{Y}_{\mathcal{L}}\bm{G}\bm{Y}_{\mathcal{L}}^{T}).
\end{aligned}
\end{eqnarray*}
Again, replacing $\bm{\Phi}$ with $\bm{\Psi}$ does not change the optimal point.
\end{proof}

Following the work in~\cite{xu2015active}, we can further relax (\ref{Propose2}) with the help of the Gershgorin circle theorem~\cite{gershgorin1931uber}.
\begin{theorem}[Gershgorin circle theorem] \label{thm:Gersh}
Let $\bm{\Psi}$ be a $N\times N$ matrix with entries $\psi_{ij}$. For each eigenvalue $\lambda$ of $\bm{\Psi}$, there exists an $i\in\{1,...,N\}$ such that:
\[
|\lambda-\psi_{ii}|\leq r_i := \sum_{j\neq i}|\psi_{ij}|.
\]
Defining $c_i := \psi_{ii}$, the set $\mathcal{C}_i=\{ x~:~|x-c_i|\leq r_i \}$ is called the $i^{\text{th}}$ Gershgorin circle of $\bm{\Psi}$, where $C_i$ is the center and $r_i$ is the radius of the circle.
\end{theorem}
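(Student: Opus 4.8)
The plan is to run the classical eigenvector-normalization argument. Given an eigenvalue $\lambda$ of $\bm{\Psi}$, I would first fix a corresponding eigenvector $\bm{v}=(v_1,\ldots,v_N)^T\neq\bm{0}$, so that $\bm{\Psi}\bm{v}=\lambda\bm{v}$. The crucial idea is to look at the coordinate of largest magnitude: choose $i$ such that $|v_i|=\max_{1\le k\le N}|v_k|$. Since $\bm{v}\neq\bm{0}$, this maximum is strictly positive, so $|v_i|>0$; this is the only point where any care is needed, and it is immediate.

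Next I would read off the $i$-th row of the eigenvalue equation, $\sum_{j=1}^{N}\psi_{ij}v_j=\lambda v_i$, and rearrange it to isolate the diagonal term as $(\lambda-\psi_{ii})v_i=\sum_{j\neq i}\psi_{ij}v_j$. Taking absolute values and applying the triangle inequality gives $|\lambda-\psi_{ii}|\,|v_i|\le\sum_{j\neq i}|\psi_{ij}|\,|v_j|$. Then I use the defining property of $i$, namely $|v_j|\le|v_i|$ for every $j$, to bound the right-hand side by $|v_i|\sum_{j\neq i}|\psi_{ij}|=|v_i|\,r_i$. Dividing through by $|v_i|>0$ yields $|\lambda-\psi_{ii}|\le r_i$, i.e.\ $\lambda\in\mathcal{C}_i$, which is exactly the claim.

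I do not anticipate a genuine obstacle: the whole argument is three lines once the max-modulus coordinate has been selected. The only subtlety worth flagging is that the index $i$ may depend on $\lambda$ — eigenvectors of different eigenvalues can peak at different coordinates — but this is consistent with the statement, which only asserts the existence of some such $i$ for each $\lambda$. If the stronger "every eigenvalue lies in $\bigcup_i\mathcal{C}_i$" phrasing were wanted, it follows at once by taking the union over $i$, but that strengthening is not needed for how the theorem is applied in the sequel.
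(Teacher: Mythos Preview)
Your argument is the standard max-modulus eigenvector proof and is entirely correct. Note, however, that the paper does not supply its own proof of this theorem: it is stated as a classical result with a citation to Gershgorin's original 1931 paper, so there is nothing to compare against beyond observing that your three-line derivation is the canonical one.
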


Denote the center and the radius of $\bm{\Psi}$'s $i^{\text{th}}$ Gershgorin circle as $c_i = \psi_{ii}$ and $r_{i}=\sum_{j\neq i}|\psi_{ij}|$, respectively, and denote the radius of $\bm{\Psi}_{\bar{\mathcal{L}}\bar{\mathcal{L}}}$'s $i^{\text{th}}$ Gershgorin circle as $s_{i}=\sum_{j\in \bar{\mathcal{L}}\setminus i}|\psi_{ij}|$.
From Theorem~\ref{thm:Gersh} and the fact that $\bm{\Psi}$ is symmetric, the eigenvalues of $\bm{\Psi}$ (and its arbitrary principal submatrix) are real and must fall within $\cup_{i}[c_i-r_i,c_i+r_i]$.
Moreover, we can always choose an $\alpha>0$ such that the lower bound $\min(c_i-r_i)$ is positive.
In such a situation, we obtain an upper bound for the condition number of $\bm{\Psi}$:
\begin{eqnarray}
\begin{aligned}
\kappa(\bm{\Psi})=\Bigl|\frac{\lambda_{\max}(\bm{\Psi})}{\lambda_{\min}(\bm{\Psi})}\Bigr|\leq\frac{\max(c_i+r_i)}{\min(c_i-r_i)}.
\end{aligned}
\end{eqnarray}
Moreover, note that $\|\bm{\Psi}\|_1=\max_{1\leq j\leq N}\sum_{i=1}^N |\psi_{ij}|=\max_{1\leq j\leq N}|c_j+r_j|$.
Similarly, we have $\|\bm{\Psi}_{\bar{\mathcal{L}}\bar{\mathcal{L}}}\|_1=\max_{i\in\bar{\mathcal{L}}}(c_i+s_i)$ and $\|\bm{\Psi}_{\bar{\mathcal{L}}\mathcal{L}}\|_1=\max_{i\in\bar{\mathcal{L}}}(r_i-s_i)$.
Therefore, we have
\begin{eqnarray*}\label{Upper}
\begin{aligned}
&\kappa(\bm{\Psi}_{\bar{\mathcal{L}}\bar{\mathcal{L}}})\left(\frac{1}{\|\bm{\Psi}_{\bar{\mathcal{L}}\mathcal{L}}\|_1}+\frac{1}{\|\bm{\Psi}_{\bar{\mathcal{L}}\bar{\mathcal{L}}}\|_1}\right)\\
&\leq \frac{\max_{i\in\bar{\mathcal{L}}}(c_i+s_i)}{\min_{i\in\bar{\mathcal{L}}}(c_i-s_i)}
\left(
\frac{\|\bm{\Psi}_{\bar{\mathcal{L}}\mathcal{L}}\|_1+\|\bm{\Psi}_{\bar{\mathcal{L}}\bar{\mathcal{L}}}\|_1}{\|\bm{\Psi}_{\bar{\mathcal{L}}\mathcal{L}}\|_1 \|\bm{\Psi}_{\bar{\mathcal{L}}\bar{\mathcal{L}}}\|_1}
\right)\\
&=\frac{\max_{i\in\bar{\mathcal{L}}}(c_i+s_i)}{\min_{i\in\bar{\mathcal{L}}}(c_i-s_i)}
\frac{\max_{i\in\bar{\mathcal{L}}}(r_i-s_i)+\max_{i\in\bar{\mathcal{L}}}(c_i+s_i)}{\max_{i\in\bar{\mathcal{L}}}(r_i-s_i)\max_{i\in\bar{\mathcal{L}}}(c_i+s_i)}\\
&=\frac{\max_{i\in\bar{\mathcal{L}}}(r_i-s_i)+\max_{i\in\bar{\mathcal{L}}}(c_i+s_i)}{\min_{i\in\bar{\mathcal{L}}}(c_i-s_i)\max_{i\in\bar{\mathcal{L}}}(r_i-s_i)}\\
&=Q(\bar{\mathcal{L}}).
\end{aligned}
\end{eqnarray*}

$Q(\bar{\mathcal{L}})$ gives an upper bound for our error that can be used as a surrogate for our objective function in (\ref{Propose1}).\footnote{We note that there is an alternative derivation which results in the exact same function $Q(\bar{\mathcal{L}})$ as an upper bound on the learning error. In particular, if one begins in~\eqref{bound} by establishing a bound on the learning error in the $\ell_1$ norm, then one can obtain a similar bound by bounding $\kappa_1$ using the fact that for any $N \times N$ matrix $\bm{A}$, $\|\bm{A}^{-1}\|_1 \le N/ \min_{i} ||a_{ii}| - r_i(\bm{A})|$, where $r_i$ is defined as above. This is a consequence of standard norm inequalities combined with the Ahlberg–Nilson–Varah~\cite{Ahlberg1963convergence,Varah1975lower} bound on $\|\bm{A}^{-1}\|_{\infty}$.}

\textbf{2) Deleting and updating Gershgorin circles.}
The second key step of our algorithm is deleting and updating Gershgorin circles iteratively, such that in each step, the surrogate function is reduced in a greedy fashion.
Specifically, we propose our Gershgorin circle-based landmark selection ({\bf GCLS}) algorithm in Algorithm~\ref{alg1}.
The configuration of $\alpha$ ensures that $b_{\min}$ is nonnegative.
After initialization, we select landmarks by deleting Gershgorin cricles (i.e., the rows/columns of $\bm{\Psi}$) iteratively (Line 7-13).
At each iteration, the deleted circle must reduce the value of the surrogate function $Q$ as much as possible (line 8).
This process is then repeated.
As a result, we shrink an upper bound of (\ref{Propose2}) and thereby hope to obtain a good approximation to the solution of (\ref{Propose1}).

\begin{algorithm}[t!]
  \caption{Gershgorin Circle-based Landmark Selection (GCLS)}
  \label{alg1}
  \begin{algorithmic}[1]
    \Require $\bm{\Phi}=[\phi_{ij}]\in\mathbb{R}^{N\times N}$, the number of landmarks $L$.
    \Ensure A set of landmarks $\mathcal{L}$.
    \State Initialize $\mathcal{L}=\emptyset$, $\bar{\mathcal{L}}=\{1,...,N\}$.
    \State Compute the bound of $\lambda(\bm{\Phi})$: $b_{\min}$, $b_{\max}$.
    \State Set $\alpha=\max\{0,-b_{\min}\}$, construct $\bm{\Psi}=\bm{\Phi}+\alpha\bm{I}_{N}$.
    \For{$i=1:N$}
           \State Initialize $c_i=\psi_{ii}$, $r_i=\sum_{j\neq i}|\psi_{ij}|$, $s_i=r_i$.
    \EndFor
    \For{$l=1:L$}
    		  \State Select landmark: $\hat{i}=\arg\min_{i\in\bar{\mathcal{L}}}Q(\bar{\mathcal{L}}\setminus i)$.
           \State $\mathcal{L}=\mathcal{L}\cup \hat{i}$, $\bar{\mathcal{L}}=\bar{\mathcal{L}}\setminus\hat{i}$.
           \For{$i\in \bar{\mathcal{L}}$}
        	         \State Update $s_i=s_i-\psi_{i\hat{i}}$.
           \EndFor
    \EndFor
  \end{algorithmic}
\end{algorithm}

\subsection{Further analysis and comparisons}\label{sec:analysis}
Compared with existing methods like MaxMinGeo, ApproxDPP, and MinCond, our GCLS method has several advantages. We summarize our comparison of these algorithms in Table~\ref{tab1}.

\begin{itemize}
\item \textbf{Universality with respect to alignment matrix.}
Both {MaxMinGeo} and ApproxDPP require the alignment matrix to be a graph Laplacian matrix because they need to compute the geodesic (or pairwise) distance between samples.
For {MinCond}, the eigenvalues of the alignment matrix must lie in the interval $(0,2)$ because the logarithm of the alignment matrix is calculated in each step.
A contribution of our GCLS method is adding a pre-processing step to construct a well-conditioned alignment matrix, and proving that it does not change the ultimate result of semi-supervised manifold learning.
With the help of this pre-processing step, our algorithm can be used with arbitrary alignment matrices.
\item \textbf{Complexity and scalability.}
All four of these methods select $L$ landmarks from $N$ samples.
The space complexity of these algorithms is $\mathcal{O}(N)$ for storing the sparse alignment matrix defined on a $K$-NN graph.
The time complexity of these algorithms differ substantially.
The time complexity of {MaxMinGeo} is $\mathcal{O}(LN\log(N))$ because the shortest path algorithm is applied to compute the geodesic distance between samples.
The time complexity of {ApproxDPP} is $\mathcal{O}(KL)$, where $K$ is the number of neighbors for the samples in the $K$-NN graph, because DPP only takes advantage of the pairwise distance between samples and their neighbors in each step.
The time complexity of {MinCond} is $\mathcal{O}(LN^3)$ because MinCond computes the logarithm of the alignment matrix in each step.
Finally, the time complexity of our GCLS algorithm is $\mathcal{O}(NL)$.
The time complexity of our algorithm is only higher than that of DPP, which has good scalability for real-time and large-scale applications.
\end{itemize}

It should be noted that although both take advantage of a relaxation based on Gershgorin circle theorem, the method we proposed in this paper is very different from our previous work in~\cite{xu2015active} in the following two aspects.
Firstly, their motivations and the corresponding objective functions are different.
The method in~\cite{xu2015active} only aims to minimize the condition number of the remaining principal submatrix of alignment matrix (i.e.,~(\ref{opt1})).
Our method, however, aims to minimize the upper bound of error in~(\ref{bound}), which considers both the condition number term and the reciprocal terms related to landmark coverage.
It thus unifies the principles of both geometric and algebraic methods.
Secondly, the landmarking algorithm proposed in this paper does not require time-consuming matrix logarithm operation --- the Gershgorin circles of the original alignment matrix are used to select landmarks directly.
Such a simple strategy greatly reduces the computational complexity of the algorithm.

\begin{table}[!t]
  \centering
  \caption{Comparisons for various algorithms.\label{tab1}}
  \begin{threeparttable}[c]
      \begin{tabular}{
        @{\hspace{2pt}}c@{\hspace{2pt}}|
        @{\hspace{2pt}}c@{\hspace{2pt}}
        @{\hspace{2pt}}c@{\hspace{2pt}}
        @{\hspace{2pt}}c@{\hspace{2pt}}
        }
        \hline\hline
        Algorithm  &$\bm{\Phi}$  &Obj. function  &Complexity\\
        \hline
        {MaxMinGeo}&Laplacian    &$\max\min d_{ij}$   &$\mathcal{O}(LN\log(N))$ \\
        {ApproxDPP}      &Laplacian    &approx. $\max\min d_{ij}$   &$\mathcal{O}(KL)$ \\
        {MinCond}  &$\lambda\in(0,2)$ &$\min\kappa(\bm{\Phi}_{\bar{\mathcal{L}}\bar{\mathcal{L}}})$ &$\mathcal{O}(LN^3)$\\
        {GCLS} &Arbitrary &$\min\frac{\kappa(\bm{\Psi}_{\bar{\mathcal{L}}\bar{\mathcal{L}}})}{\|\bm{\Psi}_{\bar{\mathcal{L}}\mathcal{L}}\|_1}+\frac{\kappa(\bm{\Psi}_{\bar{\mathcal{L}}\bar{\mathcal{L}}})}{\|\bm{\Psi}_{\bar{\mathcal{L}}\bar{\mathcal{L}}}\|_1}$    &$\mathcal{O}(NL)$\\
        \hline\hline
      \end{tabular}
  \end{threeparttable}
\end{table}

\subsection{Summary}
Our GCLS method can be viewed as a generalization of existing methods, which is robust to various alignment matrices and has relatively low complexity.
It should be noted that like existing methods, our heuristic method can only obtain a suboptimal landmarking solution.
However, experimental results in the following section will show that our method performs well in practical situations.

\begin{figure*}[h!]
\centering
\subfigure[Face data]{
\includegraphics[width=0.98\linewidth]{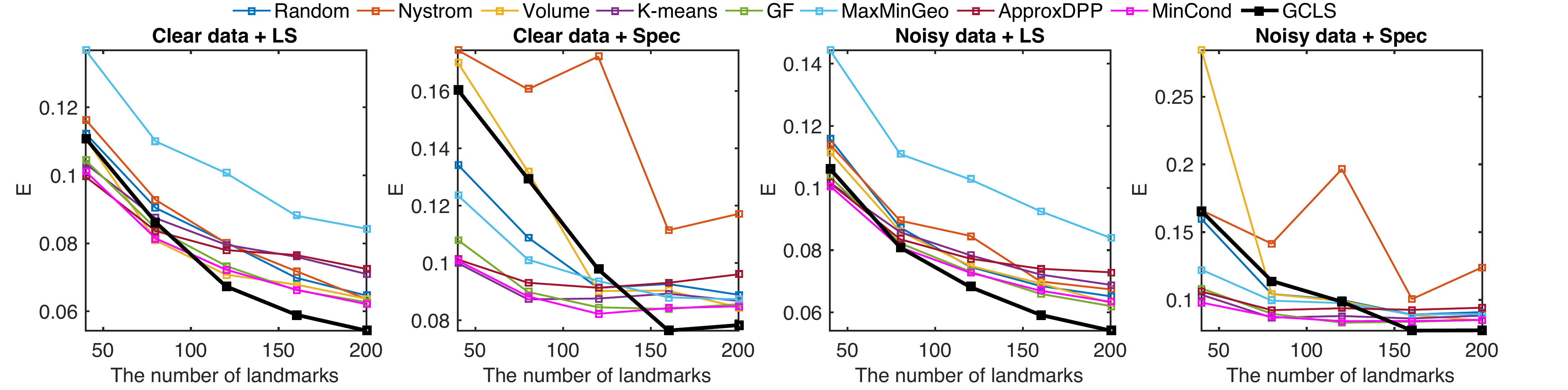}\label{fig_err_face}
}\\
\subfigure[Lips data]{
\includegraphics[width=0.98\linewidth]{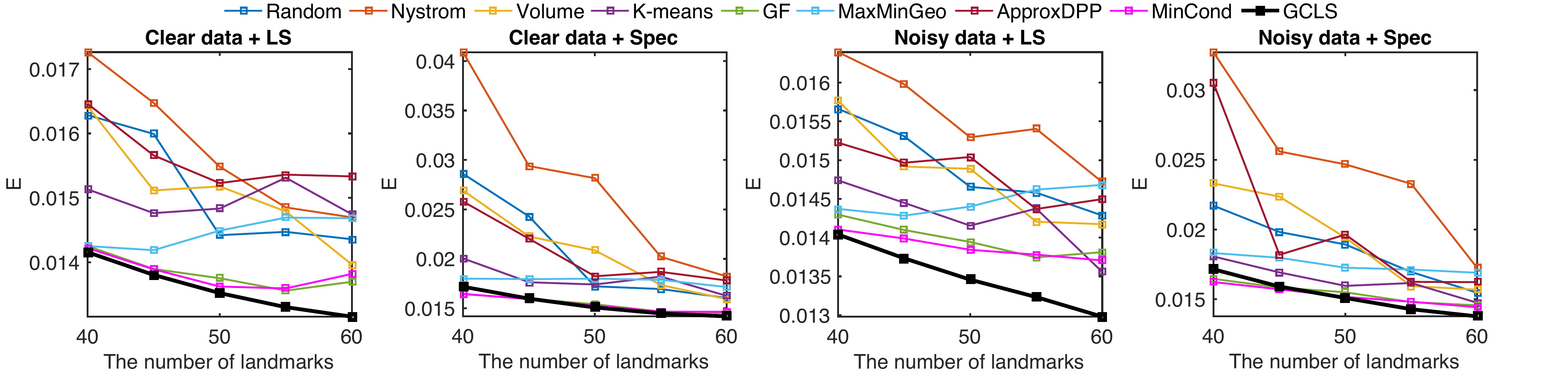}\label{fig_err_lips}
}\\
\subfigure[Jazz hand data]{
\includegraphics[width=0.98\linewidth]{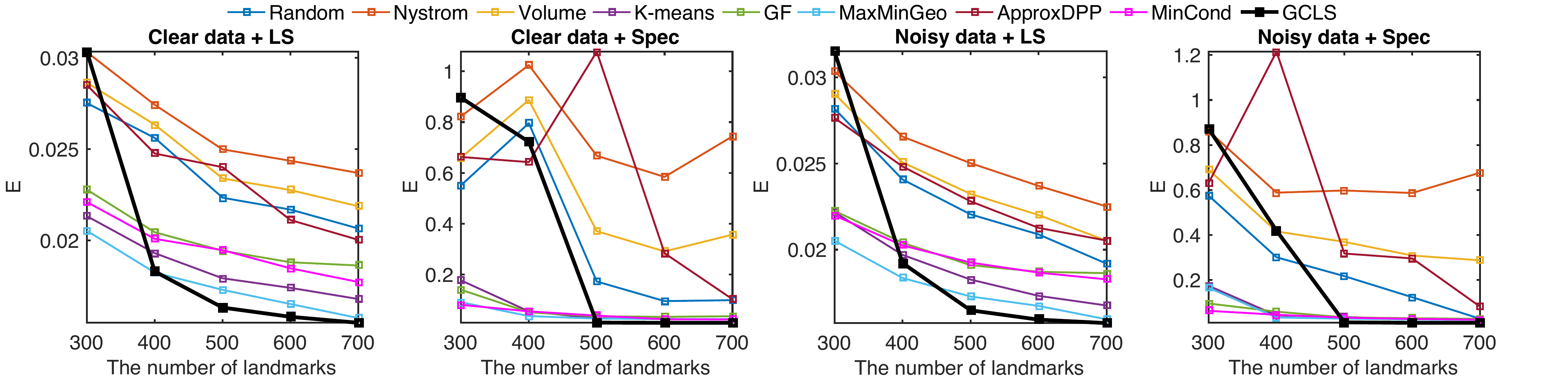}\label{fig_err_jazz}
}\\
\subfigure[Deformation grid data]{
\includegraphics[width=0.98\linewidth]{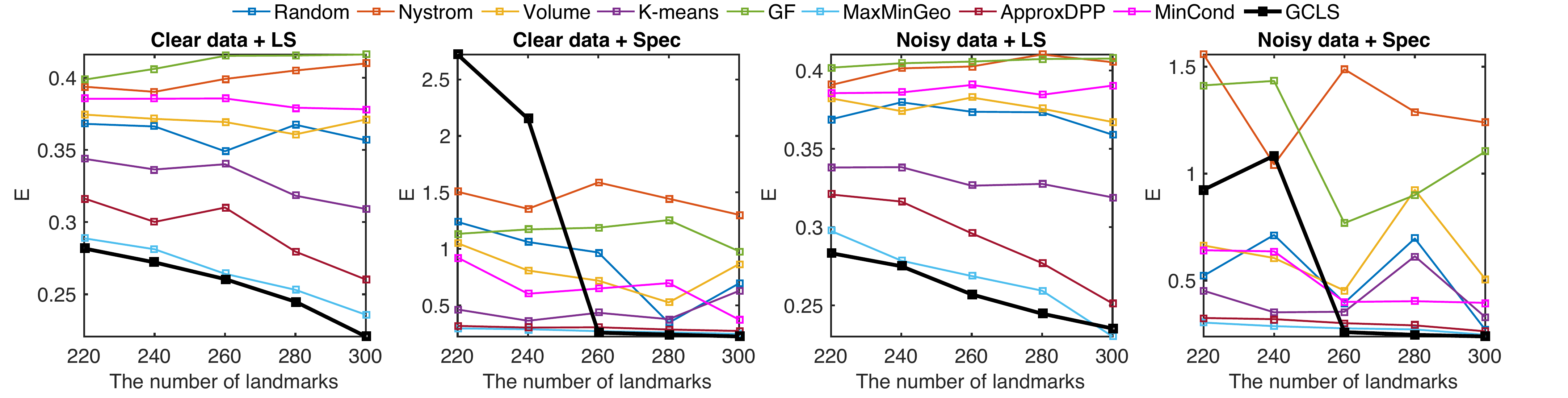}\label{fig_err_grid}
}
\caption{
The relative estimation errors obtained by our methods on various data sets.
In each row, the figures from left to right are: the estimation errors obtained by 1) the LS method on clean data; 2) the Spec method on clean data; 3) the LS method on noisy data; and 4) the Spec method on noisy data.}\label{fig1}
\end{figure*}

\section{Experiments}\label{sec:exp}
We compare our GCLS algorithm with existing landmarking algorithms on multiple data sets and in both regression and classification tasks.
Specifically, the competitors of our method include:
\begin{itemize}
\item \textbf{Random:} select landmarks uniformly without replacement.
\item \textbf{Nystr{\"o}m:} select landmarks non-uniformly with weight proportional to the $l_2$ norm of the column of $\bm{\Phi}$~\cite{drineas2006fast}.
\item \textbf{Volume:} the recent volume sampling method proposed in~\cite{derezinski2017unbiased}.
\item \textbf{K-means:} apply K-means as the pre-process of data, choose the samples that are close to clusters' centers as landmarks. This method is applied to the improved Nystr\"{o}m sampling method~\cite{zhang2008improved}.
\item \textbf{Gaussian field (GF) method:} the active learning method based on Gaussian field and harmonic functions~\cite{zhu2003combining}.
\item \textbf{MaxMinGeo:} select landmarks whose minimum geodesic distance are maximized~\cite{de2004sparse}.
\item \textbf{DPP:} select landmarks based on determinantal point processes~\cite{wachinger2015diverse}.
\item \textbf{MinCond:} select landmarks to minimize the condition number of the remaining alignment matrix~\cite{xu2015active}.
\end{itemize}
Among these methods, the MinCond and our GCLS algorithm are deterministic while the remaining methods are randomized.

In regression tasks, we test the algorithms with $20$ trials for each data set.
In each trial, we first select $N$ unlabeled samples randomly from a pool of samples and construct a $K$-NN graph.
Given the $N$ samples, we apply different landmarking algorithms to select $L$ landmarks and learn labels for the remaining samples via semi-supervised manifold learning methods.
For each landmarking algorithm and each $L$, we assume that the learning errors of different trials are drawn from a Gaussian distribution. 
Accordingly, we calculate the mean and the $95$\% confidence interval of the learning errors.
The relative learning error is calculated as
\begin{eqnarray*}
\begin{aligned}
E=\frac{\|\widehat{\bm{Z}}_u-\bm{Z}_u\|_F}{\|\bm{Z}_u\|_F}\times 100\%.
\end{aligned}
\end{eqnarray*}

For the purposes of landmarking, we use the graph Laplacian as the alignment matrix for each of the compared landmarking algorithms.
We evaluate our methods using both the {LS} algorithm in~\cite{yang2006semi} and the {Spec} algorithm in~\cite{zhang2008spectral} to achieve semi-supervised manifold learning.
In this step, we apply the original settings in these two references --- for the LS algorithm we use the graph Laplacian as the alignment matrix, and for the Spec algorithm we use the LTSA-based alignment matrix.
To investigate the robustness of each landmarking algorithm to noise, we test them on both the original clean data and on data corrupted by Gaussian noise with zero mean and variance $\sigma^2 = 0.01$.
Note that although the original GF method in~\cite{zhu2003combining} is designed for binary classification, it can be easily extend to regression task.
Additionally, the runtime of the algorithms and their bounds on the learning error of $\kappa(\bm{\Psi}_{\bar{\mathcal{L}}\bar{\mathcal{L}}})\left(\frac{1}{\|\bm{\Psi}_{\bar{\mathcal{L}}\mathcal{L}}\|_1}+\frac{1}{\|\bm{\Psi}_{\bar{\mathcal{L}}\bar{\mathcal{L}}}\|_1}\right)$ are also recorded.

In classification tasks, we test the algorithms over $20$ trials for each data set and record the averaged classification accuracy.
For each data set, we first construct an alignment matrix based on all the samples, and then select and label some samples with the help of different landmarking algorithms until each class contains $L$ labeled samples.
Finally, using these labeled samples, we train a classifier and test it on the remaining unlabeled samples.
A classifier based on the label-consistent K-SVD ({LCKSVD})~\cite{jiang2011learning} is applied.
The parameters of various landmark selection algorithms are set as follows.
For those randomized methods, the number of initial landmarks are $3$ random ones.
For {ApproxDPP}, we sample the point $\bm{x}_i$ as the new landmark with a probability $p_i\propto \prod_j(1-g_D(\bm{x}_i-\bm{x}_j))$, where $\bm{x}_j$ is the $j^{\text{th}}$ existing landmark and the function $g_D(\bm{x})=\exp(-{\|\bm{x}\|_2^2}/{2D^2})$.
The bandwidth $D$ is equal to the dimension of sample space.

\subsection{Regression tasks}
We test the different landmarking methods on four data sets, including the face data set from~\cite{tenenbaum2000global}, the lips and the Jazz hand data sets from~\cite{rahimi2005learning}, and the deformed grid data set from~\cite{tian2013hierarchical}.
For each data set, we select $N$ samples randomly and landmark $L$ of them in each trial.
\begin{itemize}
\item For the face data set, $N=500$ face images (with size $64\times 64$) are selected randomly, where the labels include a lighting intensity and two pose parameters ($d=3$).
We apply Principal Component Analysis (PCA) to reduce the dimension of data to $D=200$, and then, we construct a $K$-NN graph with $K=30$ based on the samples obtained by PCA.
For each landmark selection algorithm, we select $L=[40,...,200]$ landmarks respectively and then learn the labels for the remaining samples directly.
\item For the lips data set, $N=80$ lip images (with size $40\times 45$) are selected randomly, where the labels include four key points indicating the deformation of mouth ($d=8$).
We apply PCA to obtain the samples with $D=196$ and construct a $K$-NN graph with $K=35$.
For each landmark selection algorithm, we select $L=[40,...,60]$ landmarks respectively and then learn the labels for the remaining samples directly.
\item For the Jazz hand data set, $N=1000$ hand motion images (with size $66\times 88$, $D=5808$) are selected randomly, whose labels include four key points indicating the motion of two arms ($d=8$).
Similarly, we apply PCA to obtain the samples with $D=498$ and a $K$-NN graph with $K=30$.
For each landmark selection algorithm, we select $L=[300,...,700]$ landmarks respectively and then learn the labels for the remaining samples directly.
\item For the deformed grid data set, a reference grid image is given in Fig.~\ref{Fig4a}, whose key points are shown as red crosses.
Using various operations of deformation, a set of deformed images is generated.
We aim to label a small number of key points in these images and estimate the locations of the remaining key points.
Specifically, we segment each image into $25$ patches (with size $22\times 22$, $D=484$), each of which contains $4$ key points ($d=8$), as Fig.~\ref{Fig4a} shows.
In each trial, we randomly select $N=1500$ patches from the deformed images and then construct a $K$-NN graph with $K=30$.
For each landmark selection algorithm, we select $L=[220,...,300]$ landmarks and then learn labels for the remaining samples directly.
\end{itemize}

\begin{table}[t!]
  \centering
  \caption{The learning errors of various methods on clear data (\%).}\label{tab3}
  \begin{threeparttable}[c]
      \begin{tabular}{
        @{\hspace{1pt}}c@{\hspace{1pt}}|
        @{\hspace{2pt}}c@{\hspace{2pt}}
        @{\hspace{2pt}}c@{\hspace{2pt}}
        @{\hspace{2pt}}c@{\hspace{2pt}}
        @{\hspace{2pt}}c@{\hspace{2pt}}
        @{\hspace{2pt}}c@{\hspace{1pt}}
        }
        \hline\hline
        Data    &\multicolumn{5}{c}{Face $N$=500}\\ \hline
        $L$     &40  &80  &120 &160 &200\\ \hline
Random
&11.23$\pm$0.23 &9.04$\pm$0.36 &8.03$\pm$0.25 &6.99$\pm$0.17 &6.47$\pm$0.24\\
Nystr{\"o}m
&11.63$\pm$0.58 &9.27$\pm$0.24 &8.01$\pm$0.35 &7.18$\pm$0.11 &6.36$\pm$0.25\\
Volume
&11.08$\pm$0.37 &\textbf{8.08}$\pm$0.26 &7.08$\pm$0.14 &6.79$\pm$0.16 &6.37$\pm$0.10\\
K-means
&\textbf{10.00}$\pm$0.22 &8.73$\pm$0.08 &7.96$\pm$0.15 &7.60$\pm$0.23 &7.10$\pm${0.07}\\
GF
&10.44$\pm${0.19} &8.44$\pm$0.20 &7.34$\pm$0.14 &6.62$\pm$0.08 &6.27$\pm$0.09\\
MaxMinGeo
&12.36$\pm$0.44 &10.12$\pm${0.17} &9.37$\pm$0.15 &8.80$\pm${0.06} &8.43$\pm$0.16 \\
ApproxDPP
&9.96$\pm$0.24 &8.36$\pm$0.20 &7.80$\pm$0.17 &7.66$\pm$0.21 &7.25$\pm$0.16 \\
MinCond
&10.08$\pm$0.25 &8.14$\pm$0.23 &7.23$\pm${0.11} &6.64$\pm$0.10 &6.21$\pm$0.10 \\
GCLS
&11.09$\pm$0.30 &8.61$\pm$0.49 &\textbf{6.74}$\pm$0.20 &\textbf{5.90}$\pm$0.13 &\textbf{5.44}$\pm$0.10 \\
\hline\hline
        	Data     &\multicolumn{5}{c}{Lips $N$=80}\\ \hline
        	$L$      &40  &45  &50  &55  &60\\ \hline
Random
&1.63$\pm$0.07 &1.60$\pm$0.07 &1.44$\pm$0.05 &1.45$\pm$0.04 &1.44$\pm$0.05 \\
Nystr{\"o}m
&1.73$\pm$0.38 &1.65$\pm$0.21 &1.55$\pm$0.43 &1.49$\pm$0.09 &1.47$\pm$0.09 \\
Volume
&1.64$\pm$0.09 &1.51$\pm$0.04 &1.52$\pm$0.07 &1.48$\pm$0.11 &1.40$\pm$0.04 \\
K-means
&1.51$\pm$0.04 &1.48$\pm$0.03 &1.48$\pm$0.03 &1.53$\pm$0.04 &1.47$\pm$0.04 \\
GF
&1.43$\pm${0.02} &1.39$\pm$0.02 &1.38$\pm$0.02 &1.36$\pm$0.02 &1.37$\pm$0.02 \\
MaxMinGeo
&1.43$\pm$0.02 &1.42$\pm$0.02 &1.45$\pm$0.03 &1.47$\pm$0.04 &1.47$\pm$0.04 \\
ApproxDPP
&1.65$\pm$0.08 &1.57$\pm$0.06 &1.52$\pm$0.04 &1.54$\pm$0.04 &1.53$\pm$0.05 \\
MinCond
&\textbf{1.42}$\pm$0.02 &1.39$\pm${0.02} &1.36$\pm$0.02 &1.36$\pm$0.02 &1.38$\pm$0.02 \\
GCLS
&\textbf{1.42}$\pm$0.03 &\textbf{1.38}$\pm${0.02} &\textbf{1.35}$\pm${0.01} &\textbf{1.33}$\pm${0.01} &\textbf{1.32}$\pm${0.02} \\
\hline\hline        	
        	Data     &\multicolumn{5}{c}{Jazz Hand $N$=1000}\\ \hline
        	$L$      &300 &400 &500 &600 &700\\ \hline
Random
&2.75$\pm$0.07 &2.56$\pm$0.04 &2.23$\pm$0.04 &2.17$\pm$0.04 &2.07$\pm$0.08 \\
Nystr{\"o}m
&3.03$\pm$0.12 &2.74$\pm$0.11 &2.50$\pm$0.18 &2.44$\pm$0.11 &2.37$\pm$0.09 \\
Volume
&2.86$\pm$0.09 &2.63$\pm$0.07 &2.34$\pm$0.10 &2.28$\pm$0.07 &2.19$\pm$0.09 \\
K-means
&2.13$\pm$0.02 &1.93$\pm${0.01} &1.79$\pm$0.01 &1.74$\pm$0.01 &1.68$\pm$0.02 \\
GF
&2.28$\pm$0.02 &2.05$\pm$0.02 &1.94$\pm$0.03 &1.88$\pm$0.02 &1.86$\pm$0.03 \\
MaxMinGeo
&\textbf{2.05}$\pm${0.01} &\textbf{1.82}$\pm${0.01} &1.73$\pm$0.01 &1.65$\pm$0.01 &1.58$\pm$0.01 \\
ApproxDPP
&2.85$\pm$0.08 &2.48$\pm$0.07 &2.40$\pm$0.16 &2.11$\pm$0.04 &2.00$\pm$0.04 \\
MinCond
&2.21$\pm$0.01 &2.01$\pm$0.01 &1.95$\pm$0.01 &1.85$\pm$0.01 &1.77$\pm$0.02 \\
GCLS
&3.03$\pm$0.08 &1.83$\pm$0.06 &\textbf{1.05}$\pm${0.01} &\textbf{0.99}$\pm${0.01} &\textbf{0.98}$\pm${0.01} \\
\hline\hline       	
        	Data     &\multicolumn{5}{c}{Grid $N$=1500}\\ \hline
        	$L$      &220 &240 &260 &280 &300\\ \hline					
Random
&36.81$\pm$1.52 &36.63$\pm${0.63} &34.91$\pm$0.71 &34.64$\pm$1.16 &35.67$\pm$1.24 \\
Nystr{\"o}m
&39.38$\pm$0.67 &39.03$\pm$0.72 &39.92$\pm$0.91 &40.50$\pm$0.89 &40.99$\pm$1.50 \\
Volume
&37.45$\pm$1.34 &37.16$\pm$0.82 &36.94$\pm$0.59 &36.08$\pm$1.69 &37.11$\pm$1.35 \\
K-means
&34.39$\pm$0.59 &33.65$\pm$0.88 &34.01$\pm$1.36 &31.83$\pm$1.09 &30.91$\pm$1.63 \\
GF
&39.87$\pm$1.17 &40.62$\pm$1.22 &41.52$\pm$1.35 &41.54$\pm$1.18 &41.61$\pm$1.38 \\
MaxMinGeo
&28.88$\pm$0.73 &28.12$\pm$1.02 &26.42$\pm$0.78 &25.31$\pm$1.45 &23.59$\pm$1.80 \\
ApproxDPP
&31.64$\pm$0.96 &30.02$\pm$0.89 &30.64$\pm$1.08 &27.96$\pm$1.01 &26.02$\pm$1.16 \\
MinCond
&38.55$\pm$1.11 &38.55$\pm$1.19 &38.57$\pm$1.20 &37.93$\pm$1.44 &37.40$\pm$1.31 \\
GCLS
&\textbf{28.17}$\pm${0.57} &\textbf{27.21}$\pm$0.71 &\textbf{25.89}$\pm${0.66} &\textbf{24.20}$\pm${0.64} &\textbf{22.08}$\pm${0.92} \\
        \hline\hline
      \end{tabular}
  \end{threeparttable}
\end{table}

\begin{table}[t!]
  \centering
  \caption{The learning errors of various methods on noisy data (\%).}\label{tab3_2}
  \begin{threeparttable}[c]
      \begin{tabular}{
        @{\hspace{1pt}}c@{\hspace{1pt}}|
        @{\hspace{2pt}}c@{\hspace{2pt}}
        @{\hspace{2pt}}c@{\hspace{2pt}}
        @{\hspace{2pt}}c@{\hspace{2pt}}
        @{\hspace{2pt}}c@{\hspace{2pt}}
        @{\hspace{2pt}}c@{\hspace{1pt}}
        }
        \hline\hline
        Data    &\multicolumn{5}{c}{Face $N$=500}\\ \hline
        $L$     &40  &80  &120 &160 &200\\ \hline
Random
&11.58$\pm$0.70 &8.73$\pm$0.35 &7.46$\pm$0.27 &6.85$\pm$0.08 &6.52$\pm$0.14 \\
Nystr{\"o}m
&11.40$\pm$0.61 &8.95$\pm$0.50 &8.46$\pm$0.19 &6.99$\pm$0.17 &6.74$\pm$0.32 \\
Volume
&11.15$\pm$0.62 &8.61$\pm$0.37 &7.49$\pm$0.31 &6.93$\pm$0.17 &6.30$\pm$0.12 \\
K-means
&10.17$\pm${0.09} &8.58$\pm$0.20 &7.84$\pm$0.16 &7.21$\pm$0.18 &6.88$\pm$0.18 \\
GF
&10.36$\pm$0.28 &8.23$\pm$0.15 &7.31$\pm$0.11 &6.59$\pm${0.06} &6.20$\pm$0.07 \\
MaxMinGeo
&12.23$\pm$0.46 &9.95$\pm$0.48 &9.76$\pm$0.12 &8.93$\pm$0.10 &8.39$\pm$0.23 \\
ApproxDPP
&10.16$\pm$0.14 &8.35$\pm$0.11 &7.73$\pm$0.15 &7.40$\pm$0.23 &7.29$\pm$0.18 \\
MinCond
&\textbf{9.80}$\pm$0.25 &8.09$\pm${0.24} &7.27$\pm${0.10} &6.69$\pm$0.07 &6.34$\pm${0.06} \\
GCLS
&10.62$\pm$0.25 &\textbf{8.08}$\pm$0.43 &\textbf{6.81}$\pm$0.25 &\textbf{5.92}$\pm$0.11 &\textbf{5.41}$\pm$0.09 \\
\hline\hline
        	Data     &\multicolumn{5}{c}{Lips $N$=80}\\ \hline
        	$L$      &40  &45  &50  &55  &60\\ \hline
Random
&1.57$\pm$0.06 &1.53$\pm$0.10 &1.47$\pm$0.04 &1.46$\pm$0.04 &1.43$\pm$0.04 \\
Nystr{\"o}m
&1.64$\pm$0.08 &1.60$\pm$0.14 &1.53$\pm$0.17 &1.54$\pm$0.45 &1.47$\pm$0.16 \\
Volume
&1.58$\pm$0.06 &1.49$\pm$0.04 &1.49$\pm$0.05 &1.42$\pm$0.05 &1.42$\pm$0.06 \\
K-means
&1.47$\pm$0.02 &1.44$\pm$0.03 &1.42$\pm$0.03 &1.44$\pm$0.04 &1.36$\pm$0.03 \\
GF
&1.43$\pm${0.01} &1.41$\pm$0.02 &1.39$\pm$0.02 &1.37$\pm$0.02 &1.38$\pm$0.02 \\
MaxMinGeo
&1.44$\pm$0.03 &1.43$\pm$0.03 &1.44$\pm$0.02 &1.46$\pm$0.03 &1.47$\pm$0.04 \\
ApproxDPP
&1.52$\pm$0.05 &1.50$\pm$0.02 &1.50$\pm$0.09 &1.44$\pm$0.04 &1.45$\pm$0.05 \\
MinCond
&1.41$\pm$0.03 &1.40$\pm$0.02 &1.38$\pm$0.02 &1.38$\pm$0.02 &1.37$\pm$0.02 \\
GCLS
&\textbf{1.40}$\pm$0.03 &\textbf{1.37}$\pm${0.02} &\textbf{1.35}$\pm${0.01} &\textbf{1.32}$\pm${0.02} &\textbf{1.30}$\pm${0.02} \\
\hline\hline        	
        	Data     &\multicolumn{5}{c}{Jazz Hand $N$=1000}\\ \hline
        	$L$      &300 &400 &500 &600 &700\\ \hline
Random
&2.82$\pm$0.05 &2.41$\pm$0.05 &2.20$\pm$0.05 &2.09$\pm$0.01 &1.92$\pm$0.01 \\
Nystr{\"o}m
&3.03$\pm$0.07 &2.66$\pm$0.06 &2.50$\pm$0.17 &2.37$\pm$0.03 &2.25$\pm$0.12 \\
Volume
&2.90$\pm$0.06 &2.51$\pm$0.05 &2.32$\pm$0.10 &2.20$\pm$0.02 &2.05$\pm$0.05 \\
K-means
&2.22$\pm$0.02 &1.97$\pm$0.01 &1.83$\pm${0.01} &1.73$\pm$0.01 &1.68$\pm${0.01} \\
GF
&2.23$\pm$0.03 &2.04$\pm$0.02 &1.91$\pm$0.03 &1.87$\pm$0.01 &1.86$\pm$0.01 \\
MaxMinGeo
&\textbf{2.05}$\pm${0.01} &\textbf{1.84}$\pm${0.01} &1.73$\pm${0.01} &1.67$\pm${0.01} &1.60$\pm$0.02 \\
ApproxDPP
&2.76$\pm$0.05 &2.48$\pm$0.04 &2.28$\pm$0.04 &2.13$\pm$0.02 &2.05$\pm$0.03 \\
MinCond
&2.20$\pm$0.03 &2.02$\pm$0.01 &1.93$\pm$0.01 &1.87$\pm$0.01 &1.83$\pm$0.01 \\
GCLS
&3.15$\pm$0.16 &1.92$\pm$0.07 &\textbf{1.19}$\pm$0.02 &\textbf{0.99}$\pm$0.01 &\textbf{0.99}$\pm$0.01 \\
\hline\hline       	
        	Data     &\multicolumn{5}{c}{Grid $N$=1500}\\ \hline
        	$L$      &220 &240 &260 &280 &300\\ \hline					
Random
&36.88$\pm$0.97 &37.97$\pm$0.70 &37.36$\pm$0.72 &37.33$\pm$0.97 &27.03$\pm$0.93 \\
Nystr{\"o}m
&39.07$\pm$0.68 &40.14$\pm$0.89 &40.26$\pm${0.62} &41.02$\pm$0.90 &40.53$\pm$0.70 \\
Volume
&38.22$\pm${0.64} &37.41$\pm$0.96 &38.27$\pm$0.90 &37.55$\pm${0.82} &36.71$\pm$0.91 \\
K-means
&33.80$\pm$0.86 &33.83$\pm$0.73 &32.64$\pm$1.45 &32.76$\pm$1.22 &31.89$\pm$1.65 \\
GF
&40.18$\pm$1.14 &40.46$\pm$1.12 &40.57$\pm$1.09 &40.73$\pm$1.29 &40.76$\pm$1.38 \\
MaxMinGeo
&29.76$\pm$0.71 &27.85$\pm$0.64 &26.87$\pm$0.64 &25.93$\pm$1.13 &\textbf{23.01}$\pm$1.52 \\
ApproxDPP
&32.08$\pm$0.92 &31.63$\pm$1.11 &29.60$\pm$1.34 &27.69$\pm$1.60 &25.12$\pm$1.47 \\
MinCond
&38.55$\pm$1.16 &38.60$\pm$1.29 &39.08$\pm$1.30 &38.45$\pm$1.30 &39.04$\pm$1.39 \\
GCLS
&\textbf{28.33}$\pm$0.78 &\textbf{27.50}$\pm$0.63 &\textbf{25.70}$\pm$0.87 &\textbf{24.46}$\pm$1.20 &23.52$\pm$1.14 \\
        \hline\hline
      \end{tabular}
  \end{threeparttable}
\end{table}

\begin{figure*}[!t]
\centering
\subfigure[Some estimation results of lights and poses]{
\includegraphics[width=0.95\linewidth]{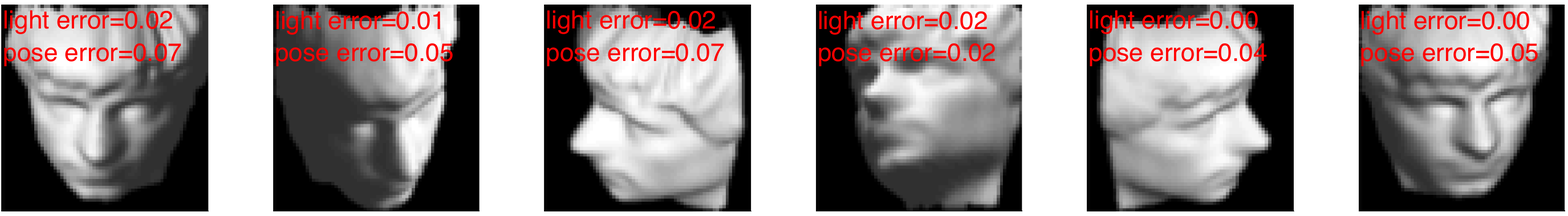}\label{Fig_face}
}
\subfigure[Some tracking results of key points of lips]{
\includegraphics[width=0.95\linewidth]{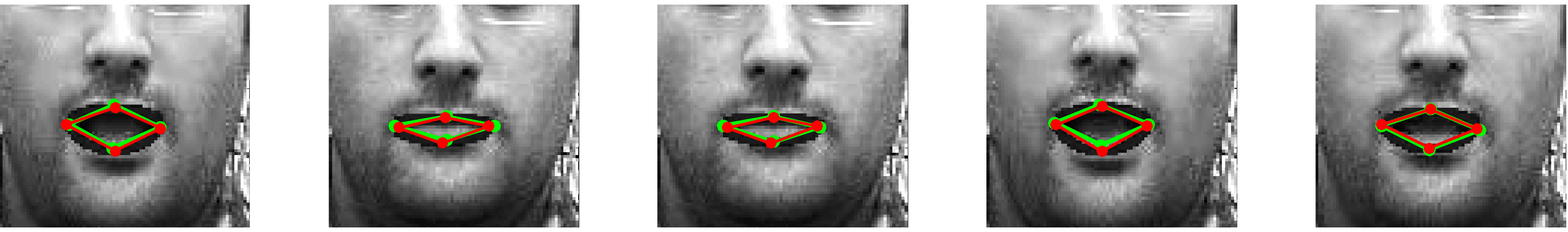}\label{Fig_lips}
}
\subfigure[Some tracking results of key points of arms]{
\includegraphics[width=0.95\linewidth]{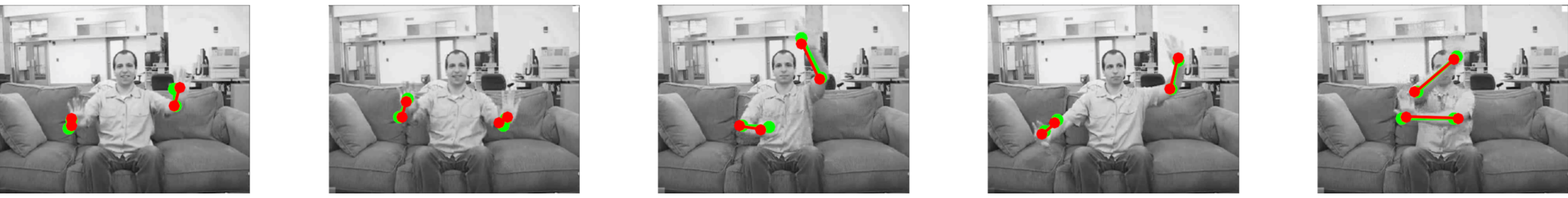}\label{Fig_jazz}
}
\subfigure[Reference and some tracking results of key points of deformed grids]{
\includegraphics[width=0.18\linewidth]{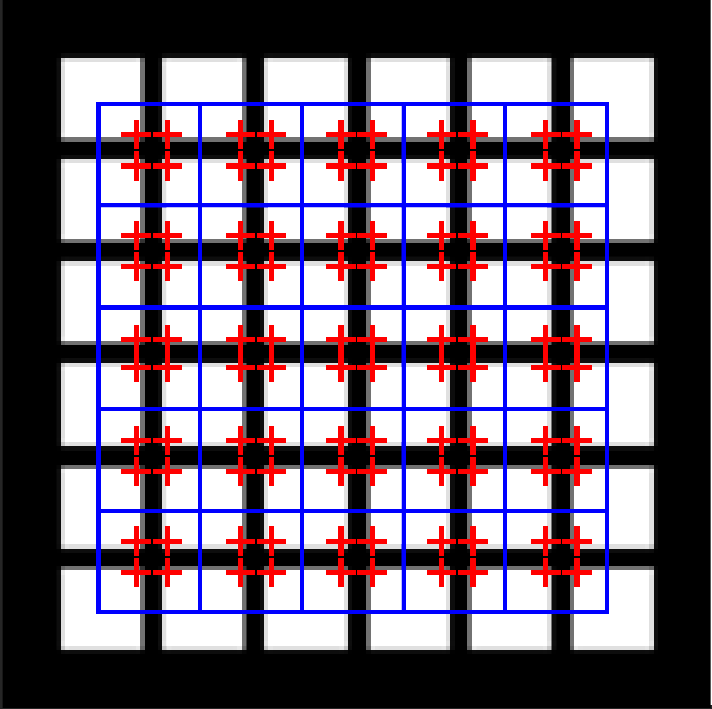}~~\label{Fig4a}
\includegraphics[width=0.18\linewidth]{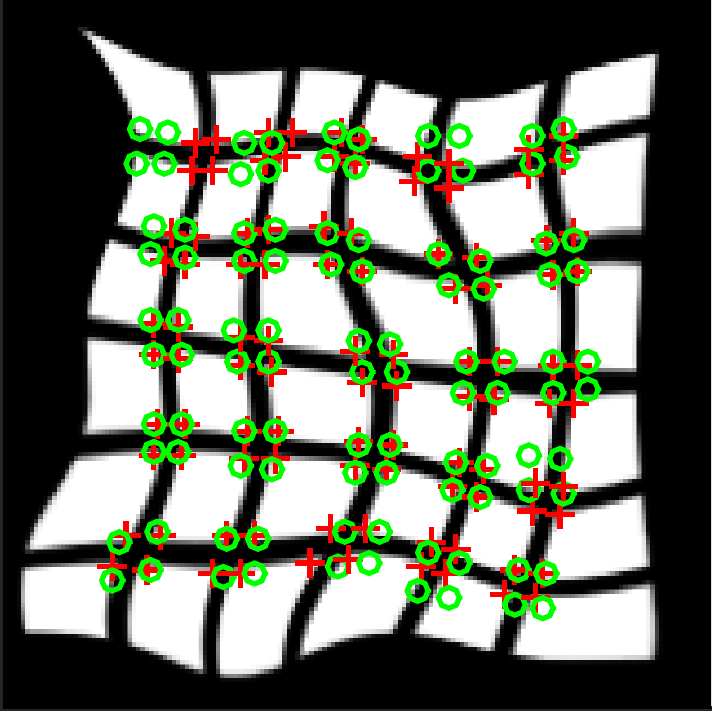}~~\label{Fig4b}
\includegraphics[width=0.18\linewidth]{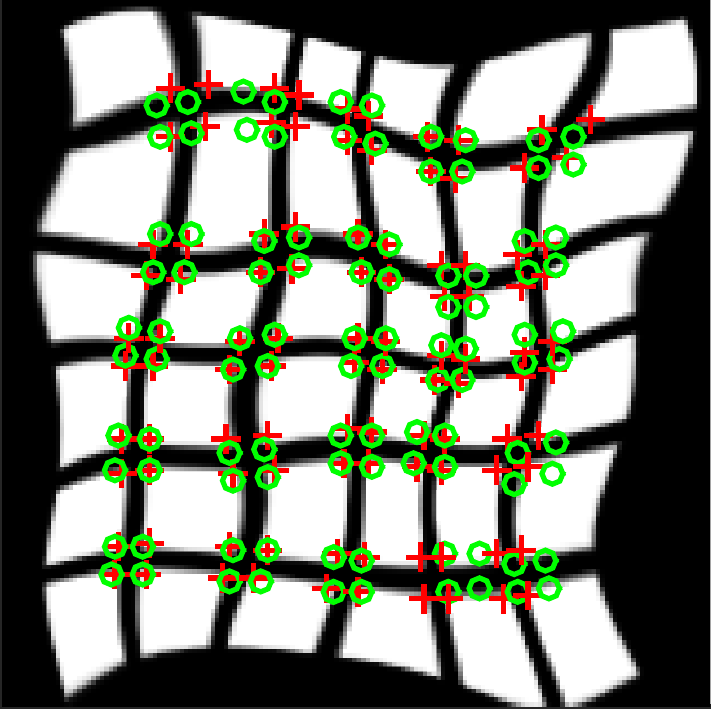}~~\label{Fig4c}
\includegraphics[width=0.18\linewidth]{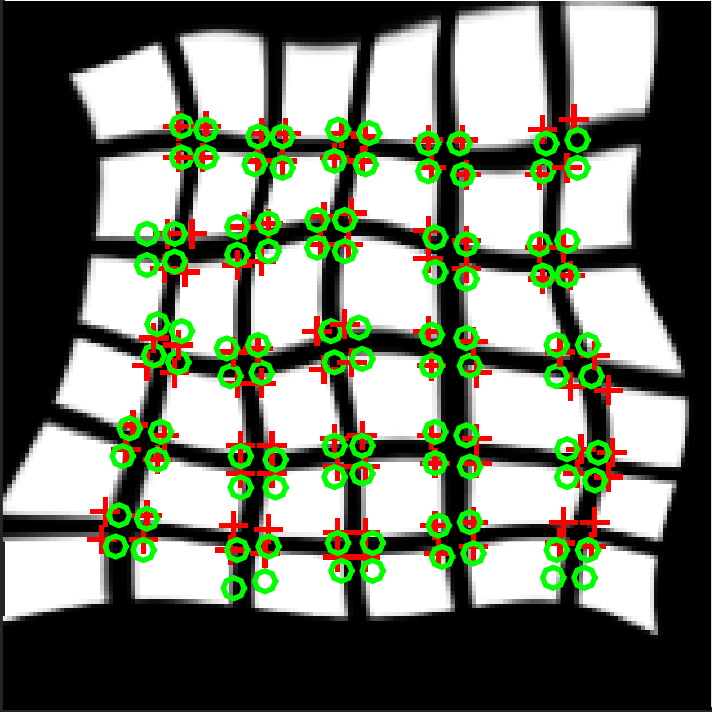}~~\label{Fig4d}
\includegraphics[width=0.18\linewidth]{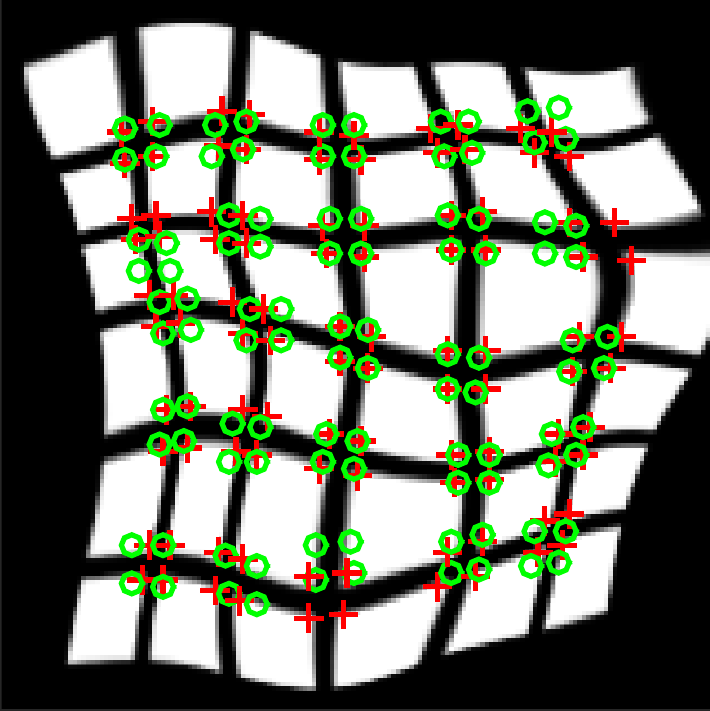}~~\label{Fig4e}
}
\caption{
The regression results of our method on various data sets.
In subfigure (a), the relative estimation errors are labeled as red.
In subfigures (b,c), the real and estimated key points are labeled as green and red, respectively.
In subfigure (d), the segmentation of reference grid (blue lines) and key points (red crosses).
In each deformed image, estimated locations of key points (green circles) and ground truth (red crosses) are given.
}\label{fig4}
\end{figure*}

\begin{figure*}[!h]
\centering
\subfigure[Runtime]{\includegraphics[height=0.25\linewidth]{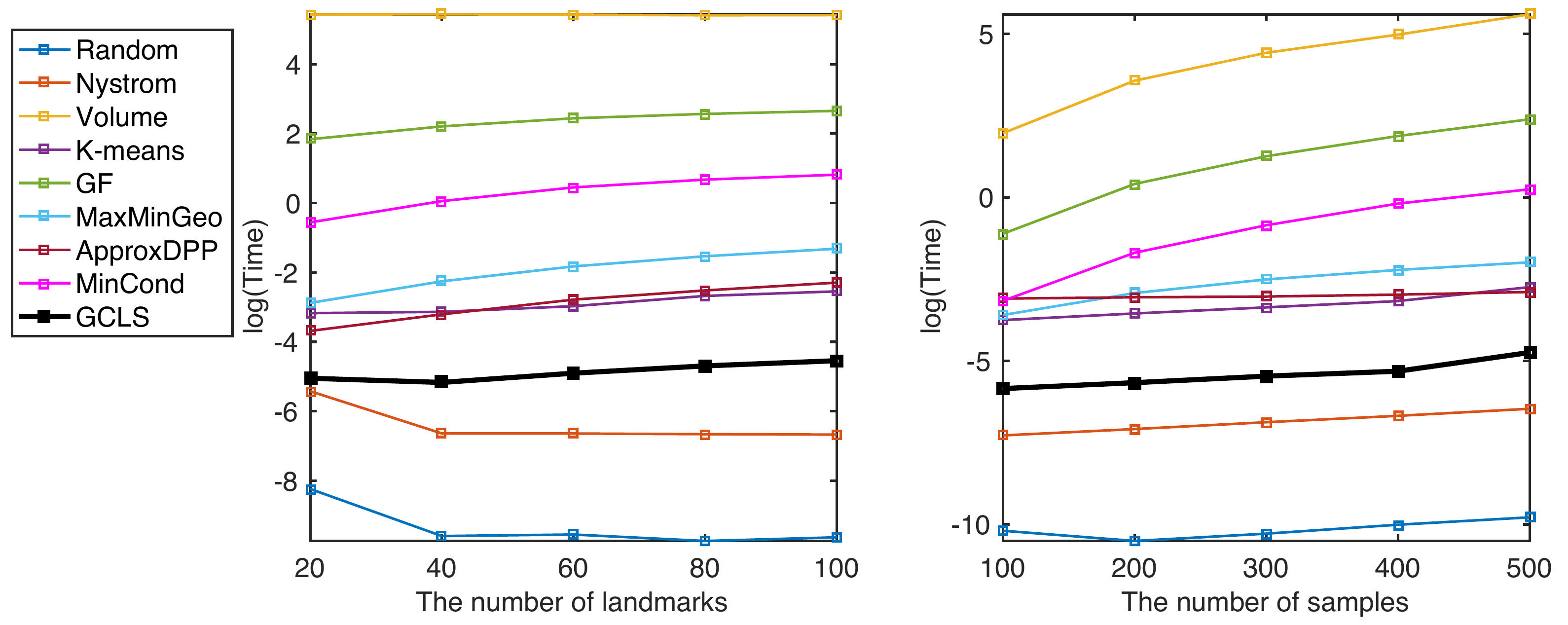}\label{figTime}
}
\subfigure[Error bound]{
\includegraphics[height=0.25\linewidth]{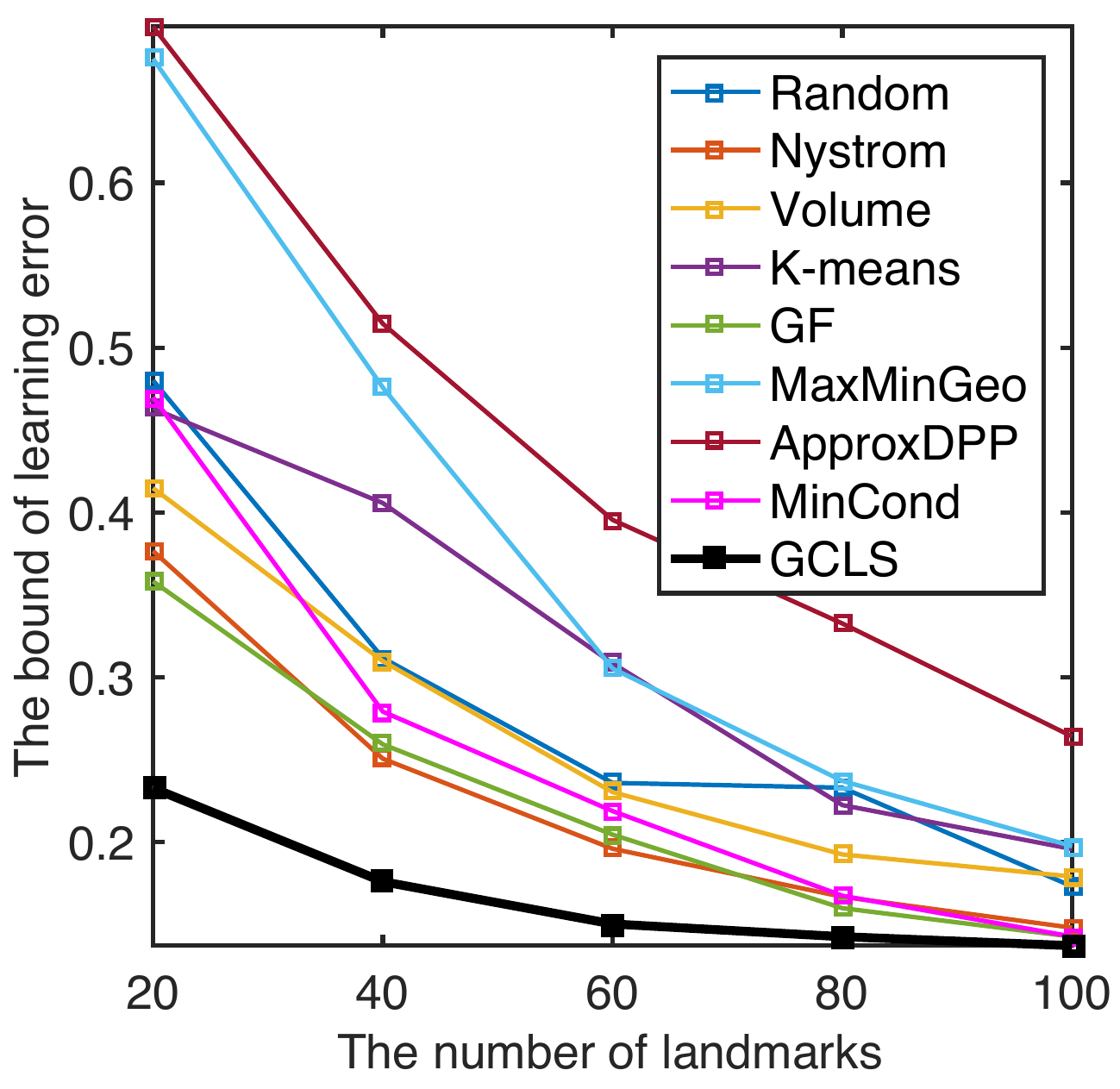}\label{fig_bound}
}
\caption{
(a) The run time versus the number of landmarks $L$ ($N=500$) and the number of samples $N$ ($L=50$) respectively.
(b) Comparison of the value of the objective function in~(\ref{Propose1}).
}\label{fig3}
\end{figure*}

The visual comparisons for various methods is shown in Fig.~\ref{fig1}.
According to Fig.~\ref{fig1}, we find that for all the data sets considered, selecting the graph Laplacian matrix as the alignment matrix in the landmarking step helps us to achieve improved learning results.
One possible reason is that the LTSA-based alignment matrix involves many hyperparameters~\cite{zhang2008spectral} which are difficult to set optimally for each specific data set.

The numerical comparisons for various landmarking methods on both clean and noisy data are shown in Tables~\ref{tab3} and \ref{tab3_2}, respectively.
We provide the semi-supervised learning results obtained by the LS algorithm.
In most situations, our GCLS algorithm achieves lower average learning error than its competitors with respect to various choices of $L$ on both clean and noisy data. 
Especially for the lips and the deformed grid data, our GCLS is consistently superior to its competitors when testing on both clean and noisy data.
For the face and the Jazz hand data, while our GCLS method performs worse than others when $L$ is small, its error reduces quickly when $L$ increases.
In general, for modestly large values of $L$ we observe that our GCLS method either clearly outperforms the others (in that the upper bound of the confidence interval for GCLS is lower than the lower bounds of all other methods' intervals) or is roughly comparable (with substantially overlapping confidence intervals).
In summary, when just landmarking a few samples, our method is at least comparable to other methods.
As the number of landmarks increases, the superiority of our method becomes more and more pronounced.

The runtime of each algorithm is given in Fig.~\ref{figTime}.
Consistent with our analysis, the speed of our algorithm is competitive and achieves a balance between complexity and performance.
In particular, volume sampling is the most time-consuming method, which requires computing the inverse of a $D\times D$ matrix $\mathcal{O}(N)$ times in each iteration.
The GF method also need to calculate matrix inverses, resulting in a runtime only slightly shorter than volume sampling.
Although the results of {MinCond} are close to (though still worse than) ours, its run time is about $100$ times longer.
Only the ApproxDPP, the column norm-based Nystr{\"o}m sampling, and the random sampling method have lower complexity than our method, but our method obtains significantly lower error than them.
Finally, in Fig.\ref{fig_bound} we compare the error bound which serves as the objective function in (\ref{Propose1}) versus $L$ for the different algorithms.
As expected, GCLS obtains a much tighter bound than others.

\subsection{Classification tasks}
We also apply our landmarking algorithm to the classification tasks of the AR-face and the Extended YaleB data sets, respectively.
We use a subset of the AR-face data set~\cite{martinez1998ar} consisting of $2600$ images from $100$ subjects.
The Extended YaleB contains $2414$ frontal face images of $38$ persons.
After selecting labeled samples via the different landmarking algorithms, we apply the {LCKSVD} algorithm in~\cite{jiang2011learning} to a learn dictionary and attain sparse codes for the samples.
The size of the dictionary and the parameters of the learning algorithm follow the setting in~\cite{jiang2011learning}.
Finally, taking the learned sparse codes as the features of the samples, we train an SVM classifier~\cite{cortes1995support}.
Table~\ref{tab4} compares classification results corresponding to various algorithms.
We observe that GCLS significantly improves classification accuracy, especially in the case of very few landmarks.

\begin{table}[t!]
  \centering
  \caption{Classification accuracy (\%).}\label{tab4}
  \begin{threeparttable}[c]
      \begin{tabular}{
        @{\hspace{2pt}}c@{\hspace{2pt}}|
        @{\hspace{2pt}}c@{\hspace{2pt}}
        @{\hspace{2pt}}c@{\hspace{2pt}}
        @{\hspace{2pt}}c@{\hspace{2pt}}
        @{\hspace{2pt}}|c@{\hspace{2pt}}
        @{\hspace{2pt}}c@{\hspace{2pt}}
        @{\hspace{2pt}}c@{\hspace{2pt}}
        }
        \hline\hline
        	Data &\multicolumn{3}{c}{AR Face} &\multicolumn{3}{|c}{Extended YaleB}\\ \hline
        $L$  &10/Class  &15/Class  &20/Class  &15/Class  &20/Class &25/Class\\ \hline
        Random      &82.06 &85.26 &87.67 &74.62 &79.05 &93.16\\
        Nystr{\"o}m &83.53 &86.67 &89.17 &74.46 &83.50 &93.22\\
        Volume      &85.50 &87.00 &89.35 &75.96 &84.22 &93.91\\
        GF          &86.65 &87.17 &\textbf{90.17} &74.46 &84.38 &94.21\\
        K-means     &82.87 &85.89 &88.25 &74.76 &83.23 &93.20\\
        MaxMinGeo   &85.21 &87.33 &89.17& 75.30 &83.73 &93.22\\
        ApproxDPP   &82.65 &84.54 &87.27& 72.98 &81.02 &89.19\\
        MinCond     &\textbf{86.67}&87.00 & 89.00& 74.46 & 83.89 & 93.91\\
        GCLS        &\textbf{86.67}&\textbf{88.67} &89.67 &\textbf{75.96} &\textbf{84.97} &\textbf{94.22}\\
        \hline\hline
      \end{tabular}
  \end{threeparttable}
\end{table}

\section{Conclusion}\label{sec:con}
In this paper, we study the active manifold learning problem and propose a landmark selection algorithm based on the Gershgorin circle theorem.
We establish connections among various landmark selection algorithms and propose a unified algorithmic framework.
Essentially, we treat the manifold landmarking problem as a combinatorial optimization problem, and the proposed landmark selection algorithm provides a heuristic solution.
Compared with other competitors, our GCLS algorithm has lower complexity and higher performance in both regression and classification tasks.
Although our method empirically achieves encouraging performance in various learning tasks, we do not provide any global optimality guarantees, and it is theoretically possible that it may output unsatisfying local optimal solutions.
In the future, we hope to provide a more rigorous theoretical underpinning for this algorithm and also plan to apply more sophisticate learning algorithms, e.g., genetic algorithms~\cite{deb2002fast} and neural network-based methods~\cite{smith1999neural,khalil2017learning,kool2018attention}, to solve (\ref{Propose1}) or closely-related variants.


\appendices
\section*{Appendix}
\subsection{A review of manifold learning}
Let $\bm{X}=[\bm{x}_1,...,\bm{x}_N]\in\mathbb{R}^{D\times N}$ be a set of samples from a manifold $\mathcal{X}$.
Let $\bm{y}_i\in \mathbb{R}^{d}$ ($d\ll D$) represent the unknown low-dimensional parameter vector corresponding to $\bm{x}_i$.
We can build a graph for the samples as follows: for each sample $\bm{x}_i$, its $K$ nearest neighbors are selected and denoted as $\bm{X}_i\in \mathbb{R}^{D\times K}$.
This graph provides us with a significant amount of geometrical information about the manifold.
We can then solve the manifold learning problem via several different strategies.

\begin{itemize}
\item \textbf{LLE} characterizes the local geometry of the manifold by finding linear coefficients that reconstruct each sample from its neighbors~\cite{roweis2000nonlinear}.
The coefficients can be learned by minimizing the reconstruction error.
The coefficients preserve the relationships among the samples and their neighbors, which are assumed to be inherited by the low-dimensional parameters.
Formally, we first compute coefficients by
\begin{eqnarray}\label{lleW}
\begin{aligned}
\min_{\bm{W}}&~\sideset{}{_{i=1}^{N}}\sum\|\bm{x}_i-\bm{X}_i\bm{w}_i\|_{2}^{2}\\
\text{s.t.}&~\sideset{}{_{j=1}^{K}}\sum w_{ij}=1,
\end{aligned}
\end{eqnarray}
where $\bm{W}=[\bm{w}_1,..,\bm{w}_N]$, $\bm{w}_i=[w_{i1},..,w_{iK}]^T$.
Then, $\bm{Y}=[\bm{y}_1,...,\bm{y}_N]$ is computed by
\begin{eqnarray}\label{lle}
\begin{aligned}
\min_{\bm{Y}}\sideset{}{_{i=1}^{N}}\sum\|\bm{y}_i-\bm{Y}_i\bm{w}_i\|_{2}^{2}
=\min_{\bm{Y}}\mbox{tr}(\bm{Y\Phi Y}^{T}),
\end{aligned}
\end{eqnarray}
where $\bm{Y}_i=[\bm{y}_{i1},...,\bm{y}_{iK}]$ contains the neighbors of $\bm{y}_i$, whose columns are the $K$ nearest samples of $\bm{y}_i$.
The entries of $\bm{\Phi}$ are given by $\phi_{ij}=\delta_{ij}-w_{ij}-w_{ji}+\bm{w}_{i}^{T}\bm{w}_j$, $i,j=1,..,N$, where $\delta_{ij}$ is $1$ if $i=j$ and $0$ otherwise.
We add constraints $\sum_{i}\bm{y}_i=\bm{0}$ and $\bm{YY}^T=\bm{I}_d$ to fix the scaling, translation, and rotation of the latent variables.
The resulting problem reduces to finding the set of eigenvectors corresponding to the $2^{\text{nd}}$ to $(d+1)^{\text{th}}$ smallest eigenvalues of $\bm{\Phi}$.
\item \textbf{LTSA} also tries to capture the local geometry of $\mathcal{X}$~\cite{zhang2004principal}.
Assume that there exists a mapping from the latent space to the ambient space, i.e., $f:~\mathcal{Y}\mapsto\mathcal{X}$.
Instead of directly computing reconstruction coefficients, LTSA approximates the tangent space at each sample.
According to the Taylor expansion, for each $\bm{x}_j\in \bm{X}_i$ we have
\begin{eqnarray}\label{taylor}
\begin{aligned}
\bm{x}_i-\bm{x}_j \approx \bm{J}_i(\bm{y}_i-\bm{y}_j) = \bm{J}_i\bm{\theta}_j^{i}.
\end{aligned}
\end{eqnarray}
Here $\bm{J}_i=[\frac{\partial f}{\partial \bm{y}_i}]\in \mathbb{R}^{D\times d}$ is the Jacobian matrix of $f$ at $\bm{y}_i$, which can be calculated as the singular vectors corresponding to the largest $d$ singular values of $\bm{X}_i-\bm{x}_i\bm{e}^T$, $\bm{e}=[1,..,1]^T$.
Then $\bm{\Theta}_i=[\bm{\theta}_1^{i},...,\bm{\theta}_K^{i}]\in \mathbb{R}^{d\times K}$ are local coordinates of $\bm{X}_i$ in the tangent space.
After computing the local tangent space at each sample, the global coordinates $\bm{Y}$ are computed by aligning the local tangent spaces together.
Assuming that the corresponding global parameter vectors $\bm{Y}_i$ differ from the local ones $\bm{\Theta}_i$ by a local affine transformation, we minimize the errors of the transformation by
$\min_{\bm{\Phi}_i}\|\bm{Y}_i\bm{\Phi}_i\|_F^2$, where $\bm{\Phi}_i$ is the orthogonal projection whose null space is spanned by the columns of $[\bm{1},\bm{\Theta}_i]$.
In practice, we obtain $\bm{Y}$ by
\begin{eqnarray}\label{ltsa}
\begin{aligned}
\min_{\bm{Y}}\sideset{}{_{i=1}^{N}}\sum\|\bm{Y}_i\bm{\Phi}_i\|_{F}^2=
\min_{\bm{Y}}\mbox{tr}(\bm{Y}\bm{\Phi}\bm{Y}^T).
\end{aligned}
\end{eqnarray}
Here $\bm{\Phi}=\sum_{i=1}^{N}\bm{S}_i\bm{\Phi}_i\bm{S}_i^T$, where the $\bm{S}_i$ are 0-1 selection matrices ensuring $\bm{Y}_i=\bm{Y}\bm{S}_i$.
Adding the normalization conditions $\bm{YY}^T = \bm{I}_d$ and $\bm{Y}\bm{1} =\bm{0}$, the solution of (\ref{ltsa}) is the set of eigenvectors corresponding to the $2^{\text{nd}}$ to $(d+1)^{\text{th}}$ smallest eigenvalues of $\bm{\Phi}$.
In recent years, LTSA has been extended to more complicated parametric models~\cite{dollar2006learning}, achieving encouraging results in many applications~\cite{xu2013manifold,xu2014manifold}.
\item \textbf{ISOMAP} seeks an embedding preserving the geodesic distance between samples~\cite{tenenbaum2000global}.
The geodesic distances are computed by finding the shortest paths in the graph connecting neighboring data points.
Let $\bm{D}$ denote the matrix of squared geodesic distances.
Let $\bm{P}\in \mathbb{R}^{N\times N}$ denote the projection matrix $\bm{I}_N-\frac{1}{N}\bm{ee}^T$.
The low dimensional global coordinates are computed by finding the eigenvectors corresponding to the $d$ maximum eigenvalues of $\bm{A}=-\frac{1}{2}\bm{P}^T\bm{DP}$.
According to \cite{yang2006semi}, the ISOMAP problem can also be rewritten as follows:
Let $\bm{Q\Lambda Q}^T$ be the eigen-decomposition of $\bm{A}$. $\bm{Q}=[\bm{q}_1,..,\bm{q}_N]$ and $\bm{\Lambda}=diag(\lambda_1,..,\lambda_N)$, $\lambda_1\geq\lambda_2\geq..\geq\lambda_N$.
Then we compute the alignment matrix as
\begin{eqnarray}
\begin{aligned}
\bm{\Phi}=\lambda_1\bm{I}_N-\bm{A}
-\sum_{i=2}^{d}(\lambda_1-\lambda_i)\bm{q}_i\bm{q}_{i}^{T}
-\frac{\lambda_1}{N}\bm{ee}^T,
\end{aligned}
\end{eqnarray}
where $\bm{\Phi}$ has $d+1$ zero eigenvalues and its null space is spanned by $[\bm{q}_1,..,\bm{q}_d,\bm{e}]$. Therefore we can solve the ISOMAP problem via $\min_{\bm{Y}}\mbox{tr}(\bm{Y}\bm{\Phi}\bm{Y}^T)$ as well.
\end{itemize}

As shown above,
although the three algorithms compute the alignment matrix $\bm{\Phi}$ from different points of view, they all reduce to the same eigen-problem.

\subsection{Theoretical bound of condition number of submatrix}
Mathematically, given an $N \times N$ matrix $\bm{B}$ with singular values
$\sigma_1(\bm{B})\geq\sigma_2(\bm{B}) \geq \cdots\geq \sigma_N(\bm{B}) \geq 0$, if there is a gap between $\sigma_n(\bm{B})$ and $\sigma_{n+1}(\bm{B})$, and $\sigma_{n+1}(\bm{B})$ is sufficiently small, one may assume that $\bm{B}$ has a numerical rank $n$.
In this case, RRQR-factorization attempts to find a permutation matrix $\bm{\Pi}$ such that the QR factorization
\begin{eqnarray}\label{eq:B=RRQR}
\begin{aligned}
&\bm{B\Pi}=\bm{QR},\\
&\bm{R}=\left[\begin{array}{cc}
                 \bm{R}_{11} & \bm{R}_{12} \\
                 \bm{0}      & \bm{R}_{22} \end{array}\right],
\end{aligned}
\end{eqnarray}
satisfies that $\bm{R}_{11}\in\mathbb{R}^{n\times n}$ and $\bm{R}_{11}$'s smallest singular value
$\sigma_{\min}(\bm{R}_{11})\approx \sigma_n(\bm{B})$ and $\bm{R}_{22}$'s largest singular value $\sigma_{\max}(\bm{R}_{22})\approx\sigma_{n+1}(\bm{B})$, where
$\bm{Q}$ is orthogonal and $\bm{R}$ is upper triangular.
In essence, $\bm{R}_{11}$ captures the well-conditioned part of $\bm{B}$.
Readers can refer to~\cite{hong1992rank} for the details of RRQR.

An important property of RRQR is that there exists an RRQR such that
\begin{eqnarray}\label{ineq:minsv-R11}
\begin{aligned}
\sigma_{\min}(\bm{R}_{11})\ge \frac{\sigma_n(\bm{B})}{\sqrt{n(N-n)+1}}.
\end{aligned}
\end{eqnarray}
Making use of this property, we obtain an upper bound for the condition number of a principal submatrix of $\bm{\Phi}$.
\begin{theorem}\label{thm:exist-principlemat}
Let the eigenvalues of $\bm{\Phi}$ be $\lambda_1(\bm{\Phi})\ge\lambda_2(\bm{\Phi})\ge\cdots\ge\lambda_N(\bm{\Phi}) \geq 0$.
There exists an $(N-L)\times (N-L)$ principal submatrix $\bm{\Phi}_{22}$
of $\bm{\Phi}$ such that
\begin{eqnarray}\label{Hk>=Ak}
\begin{aligned}
{\kappa}(\bm{\Phi}_{22}) \leq [L(N-L)+1]\frac{\lambda_1(\bm{\Phi})}{\lambda_{N-L}(\bm{\Phi})}.
\end{aligned}
\end{eqnarray}
\end{theorem}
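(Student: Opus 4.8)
The plan is to reduce the statement to the RRQR lower bound (\ref{ineq:minsv-R11}) by working with a symmetric square root of $\bm{\Phi}$. Since $\bm{\Phi}$ is symmetric with nonnegative eigenvalues, write $\bm{\Phi}=\bm{B}^T\bm{B}$ for some $N\times N$ matrix $\bm{B}$ (e.g.\ $\bm{B}=\bm{\Lambda}^{1/2}\bm{U}^T$ from the eigendecomposition $\bm{\Phi}=\bm{U}\bm{\Lambda}\bm{U}^T$), so that $\sigma_i(\bm{B})=\sqrt{\lambda_i(\bm{\Phi})}$ for every $i$. We may assume $\lambda_{N-L}(\bm{\Phi})>0$, as otherwise the right-hand side of (\ref{Hk>=Ak}) is infinite and there is nothing to prove. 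Then apply an RRQR factorization to $\bm{B}$ with numerical rank $n=N-L$, i.e.\ $\bm{B}\bm{\Pi}=\bm{QR}$ with $\bm{R}$ partitioned as in (\ref{eq:B=RRQR}) and $\bm{R}_{11}\in\mathbb{R}^{(N-L)\times(N-L)}$, choosing the factorization so that (\ref{ineq:minsv-R11}) holds: $\sigma_{\min}(\bm{R}_{11})\ge \sigma_{N-L}(\bm{B})/\sqrt{(N-L)L+1}$.

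The key step is to recognize that this column pivoting of $\bm{B}$ singles out a genuine principal submatrix of $\bm{\Phi}$. Partition $\bm{\Pi}=[\bm{\Pi}_1,\bm{\Pi}_2]$ with $\bm{\Pi}_1$ the first $N-L$ columns and $\bm{Q}=[\bm{Q}_1,\bm{Q}_2]$ conformably, so that $\bm{B}\bm{\Pi}_1=\bm{Q}_1\bm{R}_{11}$. Then the $(N-L)\times(N-L)$ principal submatrix of $\bm{\Phi}$ indexed by the columns selected by $\bm{\Pi}_1$ is
\[
\bm{\Phi}_{22}:=\bm{\Pi}_1^T\bm{\Phi}\bm{\Pi}_1=(\bm{B}\bm{\Pi}_1)^T(\bm{B}\bm{\Pi}_1)=\bm{R}_{11}^T\bm{Q}_1^T\bm{Q}_1\bm{R}_{11}=\bm{R}_{11}^T\bm{R}_{11},
\]
using $\bm{Q}_1^T\bm{Q}_1=\bm{I}_{N-L}$. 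Hence $\lambda_i(\bm{\Phi}_{22})=\sigma_i(\bm{R}_{11})^2$, and in particular $\kappa(\bm{\Phi}_{22})=\kappa(\bm{R}_{11})^2$.

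It remains only to bound $\kappa(\bm{R}_{11})$. For the numerator, $\bm{R}_{11}$ is a leading submatrix of $\bm{R}=\bm{Q}^T\bm{B}\bm{\Pi}$, and since $\bm{Q}$ is orthogonal and $\bm{\Pi}$ is a permutation, $\sigma_{\max}(\bm{R}_{11})\le\sigma_{\max}(\bm{R})=\sigma_{\max}(\bm{B})=\sqrt{\lambda_1(\bm{\Phi})}$. For the denominator we invoke (\ref{ineq:minsv-R11}). Combining these two estimates yields $\kappa(\bm{R}_{11})\le\sqrt{(N-L)L+1}\,\sqrt{\lambda_1(\bm{\Phi})/\lambda_{N-L}(\bm{\Phi})}$, and squaring gives exactly (\ref{Hk>=Ak}). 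The only point demanding genuine care is the identification in the previous paragraph, where one must check that RRQR column pivoting on a square root $\bm{B}$ corresponds to a symmetric row-and-column selection on $\bm{\Phi}=\bm{B}^T\bm{B}$ and that the orthonormality of $\bm{Q}_1$ makes the cross terms vanish; the remaining manipulations are routine singular-value bookkeeping.
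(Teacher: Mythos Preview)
Your proposal is correct and follows essentially the same route as the paper: factor $\bm{\Phi}=\bm{B}^T\bm{B}$, apply RRQR to $\bm{B}$ with $n=N-L$, identify the principal submatrix $\bm{\Phi}_{22}=\bm{R}_{11}^T\bm{R}_{11}$, and combine the RRQR lower bound on $\sigma_{\min}(\bm{R}_{11})$ with the trivial upper bound $\lambda_{\max}(\bm{\Phi}_{22})\le\lambda_1(\bm{\Phi})$. The only cosmetic differences are that you make the construction of $\bm{B}$ explicit, handle the degenerate case $\lambda_{N-L}(\bm{\Phi})=0$, and bound $\sigma_{\max}(\bm{R}_{11})$ via the submatrix-of-$\bm{R}$ argument rather than invoking interlacing on $\bm{\Phi}_{22}$ directly.
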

\begin{proof}
Since $\bm{\Phi}$ is positive semidefinite, there is an $N\times N$ $\bm{B}$ such that
$\bm{\Phi}=\bm{B}^{T}\bm{B}$.
Let $\bm{B}$ have an RRQR (\ref{eq:B=RRQR}) satisfying (\ref{ineq:minsv-R11})
with $n=N-L$.
Now notice
\begin{eqnarray}
\begin{aligned}
\bm{\Pi}^{T}\bm{\Phi\Pi}=&(\bm{B\Pi})^{T}(\bm{B\Pi})\\
=&\bm{R}^{T}\bm{R}\\
=&\left[\begin{array}{cc} \bm{R}_{11}^{T}\bm{R}_{11} & \bm{R}_{12}^{T}\bm{R}_{12} \\
                \bm{R}_{12}^{T}\bm{R}_{12} & \bm{R}_{12}^{T}\bm{R}_{12}+\bm{R}_{22}^{T}\bm{R}_{22}\end{array}\right]
\end{aligned}
\end{eqnarray}
to see that $\bm{\Phi}$ has an $(N-L)\times (N-L)$ principle submatrix $\bm{\Phi}_{22}=\bm{R}_{11}^{T}\bm{R}_{11}$ whose
smallest eigenvalue is
\begin{eqnarray}
\begin{aligned}
\sigma_{\min}(\bm{R}_{11})^2
   &\ge\left[\frac {\sigma_{N-L}(\bm{B})}{\sqrt{L(N-L)+1}}\right]^2\\
   &=\frac {\lambda_{N-L}(\bm{\Phi})}{L(N-L)+1},
\end{aligned}
\end{eqnarray}
because $\lambda_{N-L}(\bm{\Phi})=[\sigma_{N-L}(\bm{B})]^2$.
The result follows by noting that $\lambda_{\max}(\bm{\Phi}_{22}) \leq \lambda_1(\bm{\Phi})$.
\end{proof}

\subsection{Proof of Property 1}
The alignment matrix $\bm{\Phi}=[\phi_{ij}]$ in~(\ref{ML}, \ref{LS}, \ref{Spec}) is defined on the $K$-NN graph, which can be generated by various manifold learning methods, including LE~\cite{belkin2003laplacian}, LLE~\cite{roweis2000nonlinear} and LTSA~\cite{zhang2004principal}.

\begin{itemize}
\item \textbf{LE} uses a Laplacian graph matrix as the alignment matrix.
Suppose that $d_{ij}$ is the distance between sample $\bm{x}_i$ and its neighbor $\bm{x}_j$.
$\phi_{ij}=-d_{ij}$ if $\bm{x}_i$ and $\bm{x}_j$ are neighbors, otherwise $\phi_{ij}=0$, and $\phi_{ii}=\sum_j d_{ij}$.

\item \textbf{LLE} reconstructs the local geometry of the manifold by self-representation.
The entries of $\bm{\Phi}$ are given by
\begin{eqnarray*}
\begin{aligned}
\phi_{ij}=\delta_{ij}-w_{ij}-w_{ji}+\bm{w}_{i}{T}\bm{w}_j,~ i,j=1,..,N.
\end{aligned}
\end{eqnarray*}

\item \textbf{LTSA} approximates the local tangent space of each sample by Taylor expansion (\ref{taylor}).
The matrix $\bm{\Theta}_i=[\bm{\theta}_1^{i},...,\bm{\theta}_K^{i}]\in \mathbb{R}^{d\times K}$ are local latent variables of $\bm{X}_i$ in the tangent space.
Assuming that there exists an affine transformation between the global latent variables $\bm{Y}_i$ and the local ones $\bm{\Theta}_i$, we minimize the errors of the transformation by
\begin{eqnarray*}
\begin{aligned}
\min_{\bm{B}_i}~\|\bm{Y}_i\bm{B}_i\|_F^2,
\end{aligned}
\end{eqnarray*}
where $\bm{B}_i$ is the orthogonal projection whose null space is spanned by the columns of $[\bm{1},\bm{\Theta}_i]$.
Therefore, the alignment matrix $\bm{\Phi}=\sum_{i=1}^{N}\bm{S}_i\bm{B}_i\bm{B}_{i}^{T}\bm{S}_i^{T}$, where $\bm{S}_i$ are 0-1 selection matrix ensuring $\bm{X}_i=\bm{X}\bm{S}_i$.
\end{itemize}

Therefore, for the alignment matrix generated by these methods, only the elements corresponding to neighboring samples are nonzero.

\section*{Acknowledgment}
This work is supported in part by
NSF grants DMS-1317424, 
and CCF-1350616, 
and AFOSR grant FA9550-14-1-0342, as well as a gift from the Alfred P.\ Sloan Foundation.

\ifCLASSOPTIONcaptionsoff
  \newpage
\fi

\bibliographystyle{IEEEtran}
\bibliography{egbib}

\begin{thebibliography}{10}
\providecommand{\url}[1]{#1}
\csname url@samestyle\endcsname
\providecommand{\newblock}{\relax}
\providecommand{\bibinfo}[2]{#2}
\providecommand{\BIBentrySTDinterwordspacing}{\spaceskip=0pt\relax}
\providecommand{\BIBentryALTinterwordstretchfactor}{4}
\providecommand{\BIBentryALTinterwordspacing}{\spaceskip=\fontdimen2\font plus
\BIBentryALTinterwordstretchfactor\fontdimen3\font minus
  \fontdimen4\font\relax}
\providecommand{\BIBforeignlanguage}[2]{{%
\expandafter\ifx\csname l@#1\endcsname\relax
\typeout{** WARNING: IEEEtran.bst: No hyphenation pattern has been}%
\typeout{** loaded for the language `#1'. Using the pattern for}%
\typeout{** the default language instead.}%
\else
\language=\csname l@#1\endcsname
\fi
#2}}
\providecommand{\BIBdecl}{\relax}
\BIBdecl

\bibitem{belkin2006manifold}
M.~Belkin, P.~Niyogi, and V.~Sindhwani, ``Manifold regularization: A geometric
  framework for learning from labeled and unlabeled examples,'' \emph{J. Mach.
  Learn. Res.}, vol.~7, pp. 2399--2434, 2006.

\bibitem{yang2006semi}
X.~Yang, H.~Fu, H.~Zha, and J.~Barlow, ``Semi-supervised nonlinear
  dimensionality reduction,'' in \emph{Proc. Int. Conf. Mach. Learn. (ICML)},
  2006, pp. 1065--1072.

\bibitem{zhang2008spectral}
Z.~Zhang, H.~Zha, and M.~Zhang, ``Spectral methods for semi-supervised manifold
  learning,'' in \emph{Proc. IEEE Conf. Comp. Vision and Pattern Recognition
  (CVPR)}, 2008, pp. 1--6.

\bibitem{zhu2009introduction}
X.~Zhu and A.~Goldberg, ``Introduction to semi-supervised learning,''
  \emph{Synthesis lectures on artificial intelligence and machine learning},
  vol.~3, no.~1, pp. 1--130, 2009.

\bibitem{zheng2011graph}
M.~Zheng, J.~Bu, C.~Chen, C.~Wang, L.~Zhang, G.~Qiu, and D.~Cai, ``Graph
  regularized sparse coding for image representation,'' \emph{IEEE Trans. Image
  Process.}, vol.~20, no.~5, pp. 1327--1336, 2011.

\bibitem{BelkiN_Semi}
M.~Belkin and P.~Niyogi, ``Semi-supervised learning on {R}iemannian
  manifolds,'' \emph{Mach. Learn.}, vol.~56, pp. 209--239, 2004.

\bibitem{xu2015dictionary}
H.~Xu, L.~Yu, D.~Luo, H.~Zha, and Y.~Xu, ``Dictionary learning with mutually
  reinforcing group-graph structures,'' in \emph{Proc. AAAI Conf. Artificial
  Intell. (AAAI)}, 2015, pp. 3101--3107.

\bibitem{zhu2003combining}
X.~Zhu, J.~Lafferty, and Z.~Ghahramani, ``Combining active learning and
  semi-supervised learning using {G}aussian fields and harmonic functions,'' in
  \emph{Proc. ICML Work. on the Continuum from Labeled to Unlabeled Data in
  Machine Learning and Data Mining}, 2003, pp. 58--65.

\bibitem{xu2015active}
H.~Xu, H.~Zha, R.-C. Li, and M.~Davenport, ``Active manifold learning via
  {G}ershgorin circle guided sample selection,'' in \emph{Proc. AAAI Conf.
  Artificial Intell. (AAAI)}, 2015, pp. 3108--3114.

\bibitem{gershgorin1931uber}
S.~Ger{\v{s}}gorin, ``\"{U}ber die abgrenzung der eigenwerte einer matrix,''
  \emph{Izv. Akad. Nauk. USSR Otd. Fiz.-Mat. Nauk}, no.~6, pp. 749--754, 1931.

\bibitem{DonohG_Image}
D.~Donoho and C.~Grimes, ``Image manifolds which are isometric to {E}uclidean
  space,'' \emph{J. Math. Imag. and Vision}, vol.~23, no.~1, pp. 5--24, 2005.

\bibitem{WeinbS_Unsupervised}
K.~Weinberger and L.~Saul, ``Unsupervised learning of image manifolds by
  semidefinite programming,'' \emph{Int. J. Comp. Vision}, vol.~70, no.~1, pp.
  77--90, 2006.

\bibitem{HintoDR_Modelling}
G.~Hinton, P.~Dayan, and M.~Revow, ``Modelling the manifolds of images of
  handwritten digits,'' \emph{IEEE Trans. Neural Networks}, vol.~8, no.~1, pp.
  65--74, 1997.

\bibitem{bachmann2005exploiting}
C.~Bachmann, T.~Ainsworth, and R.~Fusina, ``Exploiting manifold geometry in
  hyperspectral imagery,'' \emph{IEEE Trans. Geosci. Remote Sens.}, vol.~43,
  no.~3, pp. 441--454, 2005.

\bibitem{schmidt1992multilinear}
R.~Schmidt, ``Multilinear array manifold interpolation,'' \emph{IEEE Trans.
  Signal Process.}, vol.~40, no.~4, pp. 857--866, 1992.

\bibitem{belloni2007doa}
F.~Belloni, A.~Richter, and V.~Koivunen, ``{DOA} estimation via manifold
  separation for arbitrary array structures,'' \emph{IEEE Trans. Signal
  Process.}, vol.~55, no.~10, pp. 4800--4810, 2007.

\bibitem{wu2010acoustic}
Y.~Wu, K.~Wong, and S.-K. Lau, ``The acoustic vector-sensor's near-field
  array-manifold,'' \emph{IEEE Trans. Signal Process.}, vol.~58, no.~7, pp.
  3946--3951, 2010.

\bibitem{shuman2013emerging}
D.~Shuman, S.~Narang, P.~Frossard, A.~Ortega, and P.~Vandergheynst, ``The
  emerging field of signal processing on graphs: Extending high-dimensional
  data analysis to networks and other irregular domains,'' \emph{IEEE Signal
  Process. Mag.}, vol.~30, no.~3, pp. 83--98, 2013.

\bibitem{sandryhaila2013discrete}
A.~Sandryhaila and J.~Moura, ``Discrete signal processing on graphs,''
  \emph{IEEE Trans. Signal Process.}, vol.~61, no.~7, pp. 1644--1656, 2013.

\bibitem{CostaH_Geodesic}
J.~Costa and A.~Hero., ``Geodesic entropic graphs for dimension and entropy
  estimation in manifold learning,'' \emph{IEEE Trans. Signal Process.},
  vol.~52, no.~8, pp. 2210--2221, 2004.

\bibitem{roweis2000nonlinear}
S.~Roweis and L.~Saul, ``Nonlinear dimensionality reduction by locally linear
  embedding,'' \emph{Science}, vol. 290, no. 5500, pp. 2323--2326, 2000.

\bibitem{zhang2004principal}
Z.~Zhang and H.~Zha, ``Principal manifolds and nonlinear dimensionality
  reduction via tangent space alignment,'' \emph{SIAM J. Sci. Comp.}, vol.~26,
  no.~1, pp. 313--338, 2004.

\bibitem{tenenbaum2000global}
J.~Tenenbaum, V.~De~Silva, and J.~Langford, ``A global geometric framework for
  nonlinear dimensionality reduction,'' \emph{Science}, vol. 290, no. 5500, pp.
  2319--2323, 2000.

\bibitem{belkin2003laplacian}
M.~Belkin and P.~Niyogi, ``Laplacian eigenmaps for dimensionality reduction and
  data representation,'' \emph{Neural Comp.}, vol.~15, no.~6, pp. 1373--1396,
  2003.

\bibitem{vepakomma2015fast}
P.~Vepakomma and A.~Elgammal, ``A fast algorithm for manifold learning by
  posing it as a symmetric diagonally dominant linear system,'' \emph{Appl.
  Comput. Harmon. Anal.}, vol.~40, no.~3, pp. 622--628, 2015.

\bibitem{coifman2006diffusion}
R.~Coifman and S.~Lafon, ``Diffusion maps,'' \emph{Appl. Comput. Harmon.
  Anal.}, vol.~21, no.~1, pp. 5--30, 2006.

\bibitem{talmon2013diffusion}
R.~Talmon, I.~Cohen, S.~Gannot, and R.~Coifman, ``Diffusion maps for signal
  processing: {A} deeper look at manifold-learning techniques based on kernels
  and graphs,'' \emph{IEEE Signal Process. Mag.}, vol.~30, no.~4, pp. 75--86,
  2013.

\bibitem{yang2010local}
Y.~Yang, F.~Nie, S.~Xiang, Y.~Zhuang, and W.~Wang, ``Local and global
  regressive mapping for manifold learning with out-of-sample extrapolation,''
  in \emph{Proc. AAAI Conf. Artificial Intell. (AAAI)}, 2010, pp. 649--654.

\bibitem{cohn1996active}
D.~Cohn, Z.~Ghahramani, and M.~Jordan, ``Active learning with statistical
  models,'' \emph{J. Artificial Intell. Res.}, vol.~4, no.~1, pp. 129--145,
  1996.

\bibitem{settles2010active}
B.~Settles, ``Active learning literature survey,'' \emph{Science}, vol.~10,
  no.~3, pp. 237--304, 1995.

\bibitem{liang2015gpm}
D.~Liang and J.~Paisley, ``Landmarking manifolds with {G}aussian processes,''
  in \emph{Proc. Int. Conf. Mach. Learn. (ICML)}, 2015, pp. 466--474.

\bibitem{schohn2000less}
G.~Schohn and D.~Cohn, ``Less is more: Active learning with support vector
  machines,'' in \emph{Proc. Int. Conf. Mach. Learn. (ICML)}, 2000, pp.
  839--846.

\bibitem{paisley2010active}
J.~Paisley, X.~Liao, and L.~Carin, ``Active learning and basis selection for
  kernel-based linear models: {A B}ayesian perspective,'' \emph{IEEE Trans.
  Signal Process.}, vol.~58, no.~5, pp. 2686--2700, 2010.

\bibitem{avrachenkov2013choice}
K.~Avrachenkov, P.~Gon{\c{c}}alves, and M.~Sokol, ``On the choice of kernel and
  labelled data in semi-supervised learning methods,'' in \emph{Proc. Int.
  Work. Algorithms and Models for the Web-Graph}, 2013, pp. 56--67.

\bibitem{tsiligkaridis2015decentralized}
T.~Tsiligkaridis, B.~Sadler, and A.~Hero, ``On decentralized estimation with
  active queries,'' \emph{IEEE Trans. Signal Process.}, vol.~63, no.~10, pp.
  2610--2622, 2015.

\bibitem{vijayanarasimhan2014large}
S.~Vijayanarasimhan and K.~Grauman, ``Large-scale live active learning:
  Training object detectors with crawled data and crowds,'' \emph{Int. J.
  Comput. Vis.}, vol. 108, no. 1-2, pp. 97--114, 2014.

\bibitem{thompson1999active}
C.~Thompson, M.~Califf, and R.~Mooney, ``Active learning for natural language
  parsing and information extraction,'' in \emph{Proc. Int. Conf. Mach. Learn.
  (ICML)}, 1999, pp. 406--414.

\bibitem{riccardi2005active}
G.~Riccardi and D.~Hakkani-Tur, ``Active learning: Theory and applications to
  automatic speech recognition,'' \emph{IEEE Trans. Speech Audio Process.},
  vol.~13, no.~4, pp. 504--511, 2005.

\bibitem{zhao2013active}
L.~Zhao, S.~J. Pan, E.~W. Xiang, E.~Zhong, Z.~Lu, and Q.~Yang, ``Active
  transfer learning for cross-system recommendation,'' in \emph{Proc. AAAI
  Conf. Artificial Intell. (AAAI)}, 2013, pp. 1205--1211.

\bibitem{chen2006improved}
Y.~Chen, M.~Crawford, and J.~Ghosh, ``Improved nonlinear manifold learning for
  land cover classification via intelligent landmark selection,'' in
  \emph{Proc. IEEE Int. Geosci. Remote Sens. Symp. (IGARSS)}, 2006, pp.
  545--548.

\bibitem{chi2013selection}
J.~Chi and M.~Crawford, ``Selection of landmark points on nonlinear manifolds
  for spectral unmixing using local homogeneity,'' \emph{IEEE Geosci. Remote
  Sens. Lett.}, vol.~10, no.~4, pp. 711--715, 2013.

\bibitem{chi2014active}
------, ``Active landmark sampling for manifold learning based spectral
  unmixing,'' \emph{IEEE Geosci. Remote Sens. Lett.}, vol.~11, no.~11, pp.
  1881--1885, 2014.

\bibitem{drineas2005nystrom}
P.~Drineas and M.~Mahoney, ``On the {N}ystr{\"o}m method for approximating a
  {G}ram matrix for improved kernel-based learning,'' \emph{J. Mach. Learn.
  Res.}, vol.~6, pp. 2153--2175, 2005.

\bibitem{drineas2006fast}
P.~Drineas, R.~Kannan, and M.~Mahoney, ``Fast {M}onte {C}arlo algorithms for
  matrices {I}: {A}pproximating matrix multiplication,'' \emph{SIAM J.
  Computing}, vol.~36, no.~1, pp. 132--157, 2006.

\bibitem{zhang2008improved}
K.~Zhang, I.~Tsang, and J.~Kwok, ``Improved {N}ystr{\"o}m low-rank
  approximation and error analysis,'' in \emph{Proc. Int. Conf. Mach. Learn.
  (ICML)}, 2008, pp. 1232--1239.

\bibitem{kumar2012sampling}
S.~Kumar, M.~Mohri, and A.~Talwalkar, ``Sampling methods for the {N}ystr{\"o}m
  method,'' \emph{J. Mach. Learn. Res.}, vol.~13, pp. 981--1006, 2012.

\bibitem{pukelsheim1993optimal}
F.~Pukelsheim, \emph{Optimal design of experiments}.\hskip 1em plus 0.5em minus
  0.4em\relax SIAM, 1993, vol.~50.

\bibitem{derezinski2017unbiased}
M.~Derezinski and M.~Warmuth, ``Unbiased estimates for linear regression via
  volume sampling,'' in \emph{Proc. Conf. Neural Inform. Process. Systems
  (NIPS)}, 2017, pp. 3087--3096.

\bibitem{de2004sparse}
V.~De~Silva and J.~Tenenbaum, ``Sparse multidimensional scaling using landmark
  points,'' Stanford University, Tech. Rep., 2004.

\bibitem{silva2006selecting}
J.~Silva, J.~Marques, and J.~Lemos, ``Selecting landmark points for sparse
  manifold learning,'' in \emph{Proc. Conf. Neural Inform. Process. Systems
  (NIPS)}, 2006, pp. 1241--1248.

\bibitem{wachinger2015diverse}
C.~Wachinger and P.~Golland, ``Diverse landmark sampling from determinantal
  point processes for scalable manifold learning,'' \emph{arXiv:1503.03506},
  2015.

\bibitem{golub2012matrix}
G.~Golub and C.~Van~Loan, \emph{Matrix computations}.\hskip 1em plus 0.5em
  minus 0.4em\relax Baltimore, MD: JHU Press, 1996.

\bibitem{braatz1994minimizing}
R.~Braatz and M.~Morari, ``Minimizing the {E}uclidean condition number,''
  \emph{SIAM J. Control Optim.}, vol.~32, no.~6, pp. 1763--1768, 1994.

\bibitem{greif2006minimizing}
C.~Greif and J.~Varah, ``Minimizing the condition number for small rank
  modifications,'' \emph{SIAM J. Matrix Anal. Appl.}, vol.~29, no.~1, pp.
  82--97, 2006.

\bibitem{lu2011minimizing}
Z.~Lu and T.~Pong, ``Minimizing condition number via convex programming,''
  \emph{SIAM J. Matrix Anal. Appl.}, vol.~32, no.~4, pp. 1193--1211, 2011.

\bibitem{chen2011minimizing}
X.~Chen, R.~Womersley, and J.~Ye, ``Minimizing the condition number of a {G}ram
  matrix,'' \emph{SIAM J. Optim.}, vol.~21, no.~1, pp. 127--148, 2011.

\bibitem{hong1992rank}
Y.-P. Hong and C.-T. Pan, ``Rank-revealing {QR} factorizations and the singular
  value decomposition,'' \emph{Math. Comp.}, vol.~58, no. 197, pp. 213--232,
  1992.

\bibitem{Ahlberg1963convergence}
J.~Ahlberg and E.~Nilson, ``Convergence properties on the spline fit,''
  \emph{J. SIAM}, vol.~11, no.~1, pp. 95--104, 1963.

\bibitem{Varah1975lower}
J.~Varah, ``A lower bound for the smallest singular value of a matrix,''
  \emph{Linear Algebra Appl.}, vol.~11, no.~1, pp. 3--5, 1975.

\bibitem{jiang2011learning}
Z.~Jiang, Z.~Lin, and L.~Davis, ``Learning a discriminative dictionary for
  sparse coding via label consistent {K-SVD},'' in \emph{Proc. IEEE Conf. Comp.
  Vision and Pattern Recognition (CVPR)}, 2011, pp. 1697--1704.

\bibitem{rahimi2005learning}
A.~Rahimi, T.~Darrell, and B.~Recht, ``Learning appearance manifolds from
  video,'' in \emph{Proc. IEEE Conf. Comp. Vision and Pattern Recognition
  (CVPR)}, 2005, pp. 868--875.

\bibitem{tian2013hierarchical}
Y.~Tian and S.~Narasimhan, ``Hierarchical data-driven descent for efficient
  optimal deformation estimation,'' in \emph{Proc. IEEE Int. Conf. Comp. Vision
  (ICCV)}, 2013, pp. 2288--2295.

\bibitem{martinez1998ar}
A.~Martinez and R.~Benavente, ``The {AR} face database,'' Computer Vision
  Center, Tech. Rep.~24, 1998.

\bibitem{cortes1995support}
C.~Cortes and V.~Vapnik, ``Support-vector networks,'' \emph{Mach. Learn.},
  vol.~20, no.~3, pp. 273--297, 1995.

\bibitem{deb2002fast}
K.~Deb, A.~Pratap, S.~Agarwal, and T.~Meyarivan, ``A fast and elitist
  multiobjective genetic algorithm: {NSGA-II},'' \emph{IEEE transactions on
  evolutionary computation}, vol.~6, no.~2, pp. 182--197, 2002.

\bibitem{smith1999neural}
K.~A. Smith, ``Neural networks for combinatorial optimization: a review of more
  than a decade of research,'' \emph{INFORMS Journal on Computing}, vol.~11,
  no.~1, pp. 15--34, 1999.

\bibitem{khalil2017learning}
E.~Khalil, H.~Dai, Y.~Zhang, B.~Dilkina, and L.~Song, ``Learning combinatorial
  optimization algorithms over graphs,'' in \emph{Advances in Neural
  Information Processing Systems}, 2017, pp. 6351--6361.

\bibitem{kool2018attention}
W.~Kool and M.~Welling, ``Attention solves your tsp,'' \emph{arXiv preprint
  arXiv:1803.08475}, 2018.

\bibitem{dollar2006learning}
P.~Doll{\'a}r, V.~Rabaud, and S.~Belongie, ``Learning to traverse image
  manifolds,'' in \emph{Proc. Conf. Neural Inform. Process. Systems (NIPS)},
  2006, pp. 361--368.

\bibitem{xu2013manifold}
H.~Xu and H.~Zha, ``Manifold based face synthesis from sparse samples,'' in
  \emph{Proc. IEEE Int. Conf. Comp. Vision (ICCV)}, 2013, pp. 2208--2215.

\bibitem{xu2014manifold}
H.~Xu, H.~Zha, and M.~Davenport, ``Manifold based dynamic texture synthesis
  from extremely few samples,'' in \emph{Proc. IEEE Conf. Comp. Vision and
  Pattern Recognition (CVPR)}, 2014, pp. 3019--3026.

\end{thebibliography}




\end{document}